\definecolor{lightblue}{HTML}{2970CC}
\definecolor{lightpurple}{HTML}{673147}
\definecolor{ForestGreen}{HTML}{FF5733}
\crefname{algocf}{Algorithm}{algorithms}
\Crefname{algocf}{Algorithm}{Algorithms}
\renewcommand{\geq}{\geqslant}
\renewcommand{\le}{\leqslant}
\renewcommand{\leq}{\leqslant}
\declaretheoremstyle[
  bodyfont=\itshape,
  headfont=\bfseries,
  spaceabove=6pt,
  spacebelow=6pt
]{spaced}
\declaretheorem[numberwithin=section,style=spaced]{theorem}
\declaretheorem[numberlike=theorem,style=spaced]{definition}
\declaretheorem[numberlike=theorem,style=spaced]{assumption}
\DeclareMathOperator*{\argmin}{arg\!\min}
\DeclareMathOperator{\Law}{Law}
\newcommand{\A}{\ensuremath{\mathcal{A}}}
\newcommand{\calL}{\mathcal{L}}
\newcommand{\R}{\ensuremath{\mathbb{R}}}
\newcommand{\N}{\ensuremath{\mathbb{N}}}
\newcommand{\E}{\mathbb{E}}
\newcommand{\ceil}[1]{\lceil #1 \rceil}
\newcommand{\Id}{\text{\it Id}}
\numberwithin{equation}{section}
\newcommand{\tmin}{t_{\min}}
\newcommand{\tmax}{t_{\max}}
\newcommand{\stopgrad}[1]{\mathsf{stopgrad}(#1)}
\newcommand{\lmd}{\mathsf{LMD}}
\newcommand{\emd}{\mathsf{EMD}}
\newcommand{\pfmm}{\mathsf{PFMM}}
\newcommand{\fmmlab}{\mathsf{FMM}}
\newcommand{\ee}{\mathsf{EE}}
\newcommand{\si}{\mathsf{SI}}
\title{\centering Flow map matching with stochastic interpolants:\\A mathematical framework for consistency models}
\author{Nicholas M.~Boffi$^*$}
\author{Michael S.~Albergo$^*$}
\author{Eric Vanden-Eijnden}
\affil{Courant Institute of Mathematical Sciences}
\begin{document}

\maketitle

\begin{abstract}
	Generative models based on dynamical equations such as flows and diffusions offer exceptional sample quality, but require computationally expensive numerical integration during inference.
The advent of consistency models has enabled efficient one-step or few-step generation, yet despite their practical success, a systematic understanding of their design has been hindered by the lack of a comprehensive theoretical framework.
Here we introduce Flow Map Matching (FMM), a principled framework for learning the two-time flow map of an underlying dynamical generative model, thereby providing this missing mathematical foundation.
Leveraging stochastic interpolants, we propose training objectives both for distillation from a pre-trained velocity field and for direct training of a flow map over an interpolant or a forward diffusion process.
Theoretically, we show that FMM unifies and extends a broad class of existing approaches for fast sampling, including consistency models, consistency trajectory models, and progressive distillation.
Experiments on CIFAR-10 and ImageNet-32 highlight that our approach can achieve sample quality comparable to flow matching while reducing generation time by a factor of 10-20.

\end{abstract}

\section{Introduction}
\label{sec:intro}
In recent years, diffusion models~\citep{song2020score, ho2020denoising, sohldickstein2015deep, song_improved_2020, song_generative_2020} have achieved state of the art performance across diverse modalities, including image~\citep{dhariwal_diffusion_2021, rombach2022high, esser_scaling_2024}, audio~\citep{popov_grad-tts_2021, jeong_diff-tts_2021, huang_prodiff_2022, lu_conditional_2022}, and video~\citep{ho_imagen_2022, ho_video_2022, blattmann_align_2023, wu_tune--video_2023}.
These models belong to a broader class of approaches including flow matching~\citep{lipman2022flow}, rectified flow~\citep{liu2022flow}, and stochastic interpolants~\citep{albergo2022building, albergo2023stochastic}, which construct a path in the space of measures between a base and a target distribution by specifying an explicit mapping between samples from each.
This construction yields a dynamical transport equation governing the evolution of the time-dependent probability measure along the path.
The generative modeling problem then reduces to learning a velocity field that accomplishes this transport, which provides an efficient and stable training paradigm.

At sample generation time, models in this class generate data by smoothly transforming samples from the base into samples from the target through numerical integration of a differential equation.
While effective, the number of integration steps required to produce high-quality samples incurs a cost that can limit real-time applications~\citep{chi_diffusion_2024}.
Comparatively, one-step models such as generative adversarial networks~\citep{goodfellow_generative_2014, goodfellow_generative_2020, creswell_generative_2018} are notoriously difficult to train~\citep{metz_unrolled_2017, arjovsky_wasserstein_2017}, but can be orders of magnitude more efficient to sample, because they only require a single network evaluation.
As a result, there has been significant recent research effort focused on maintaining the stable training of diffusions while reducing the computational burden of inference~\citep{karras_elucidating_2022}.
\begin{figure}[t]
	\centering
	\includegraphics[width=0.85\textwidth]{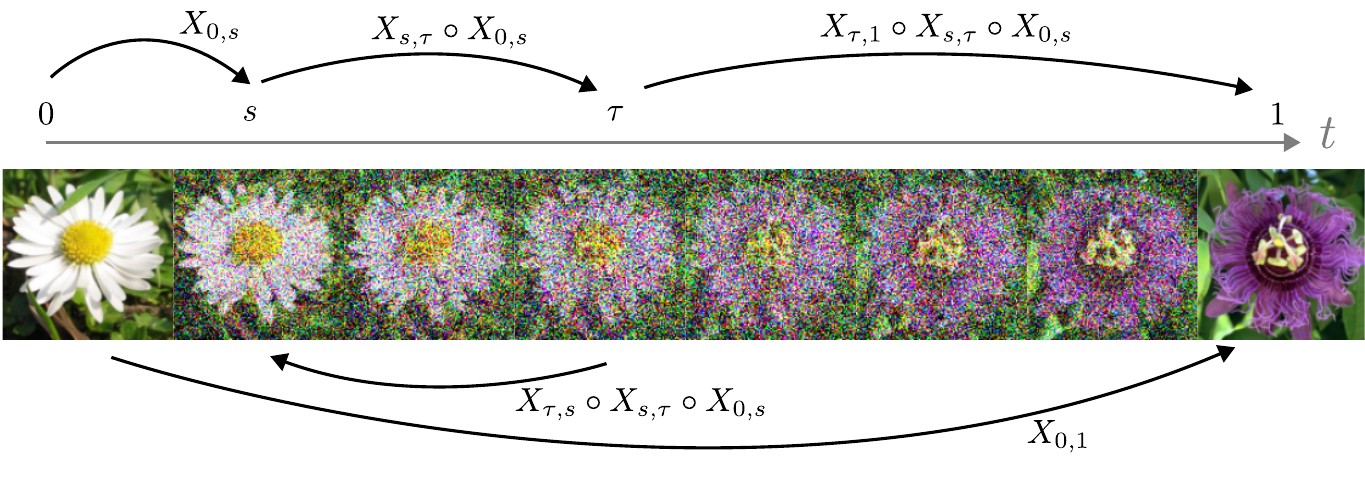}
	\caption{
		\textbf{Overview of Flow Map Matching.}
		Our approach learns the two-time flow map $X_{s, t}$ that transports the solution of an ordinary differential equation from time $s$ to time $t$.
		Unlike methods that learn instantaneous velocity fields, this bidirectional map can be used to build an integrator with arbitrary discretization.
		The integrator is exact in theory, and --crucially-- its number of steps can be adjusted post-training to balance accuracy and computational efficiency.
		The flow map can be distilled from a known velocity field or learned directly, and supports arbitrary base distributions, as illustrated here with image-to-image translation.
	}
	\label{fig:overview}
\end{figure}
Towards this goal, we introduce Flow Map Matching---a theoretical framework for learning the two-time flow map of a probability flow.
Specifically, our primary contributions can be summarized as:

\begin{itemize}[leftmargin=0.2in]
	\item \textbf{A mathematical framework.}
	      We characterize the flow map theoretically by identifying its fundamental mathematical properties, which we show immediately lead to distillation-based algorithms for estimating the flow map from a pre-trained velocity field.
	      Using our characterization, we establish connections between our framework and recent approaches for learning consistency models~\citep{song_consistency_2023, song_improved_2023, kim_consistency_2024} and for progressive distillation of a diffusion model~\citep{salimans_progressive_2022, zheng_fast_2023}.
	\item \textbf{Lagrangian vs Eulerian learning.}
	      %
	      We develop a novel \textit{Lagrangian} objective for distilling a flow map from a pre-trained velocity field, which performs well across our experiments. This loss is complementary to a related \textit{Eulerian} loss, which we prove is the continuous-time limit of consistency distillation~\citep{song_consistency_2023}. We show that both the Lagrangian and Eulerian losses control the Wasserstein distance between the teacher and the student models.
	\item \textbf{Direct training.}
	      We extend our Lagrangian loss to design a novel direct training objective for flow maps that eliminates the need for a pre-trained velocity field.
	      We also discuss how to modify the Eulerian loss to allow for direct training of the map, and identify some caveats with this procedure.
	      Our approach elucidates how the standard training signal for flow matching can be adapted to capture the more global signal needed to train the flow map.
	\item \textbf{Numerical experiments.}
	      We validate our theoretical framework through experiments on CIFAR-10 and ImageNet-$32$, where we achieve high sample quality at significantly reduced computational cost.
	      In particular, we highlight that learning the two-time flow map enables post-training adjustment of sampling steps, allowing practitioners to systematically trade accuracy for computational efficiency.
\end{itemize}

\section{Related Work}
\label{sec:related}
\paragraph{Dynamical transport of measure.}
Our approach is built upon the modern perspective of generative modeling based on dynamical transport of measure.
Grounded in the theory of optimal transport~\citep{villani2009optimal, benamou2000computational, santambrogio2015optimal}, these models originate at least with~\citep{tabak_density_2010, tabak2013}, but have been further developed by the machine learning community in recent years~\citep{rezende_variational_2015, dinh_density_2017, grathwohl_ffjord, chen_neural_2019}.
A breakthrough in this area originated with the appearance of score-based diffusion models~\citep{song2020score, song_improved_2020, song_generative_2020}, along with related denoising diffusion probabilistic models~\citep{ho2020denoising, sohldickstein2015deep}.
These methods generate samples by learning to time-reverse a stochastic differential equation with stationary density given by a Gaussian.
More recent approaches such as flow matching~\citep{lipman2022flow}, rectified flow~\citep{liu2022flow, liu2022let}, and stochastic interpolants~\citep{albergo2022building, albergo2023stochastic, ma_sit_2024, chen_probabilistic_2024} similarly construct connections between the base density and the target, but allow for arbitrary base densities and provide a greater degree of flexibility in the construction of the connection.

\paragraph{Reducing simulation costs.}
There has been significant recent interest in reducing the cost associated with solving an ODE or an SDE for a generative model based on dynamical measure transport.
One such approach, pioneered by rectification~\citep{liu2022flow, liu2022let}, is to try to straighten the paths of the probability flow, so as to enable more efficient adaptive integration.
In the limit of optimal transport, the paths become straight lines and the integration can be performed in a single step.
A second approach is to introduce \textit{couplings} between the base and the target, such as by computing the optimal transport over a minibatch~\citep{pooladian2023multisample, tong_improving_2023}, or by using data-dependent couplings~\citep{albergo_stochastic_dd_2023}, which can simplify both training and sampling.
A third approach has been to design hand-crafted numerical solvers tailored for diffusion models~\citep{karras_elucidating_2022, zhang_fast_2023, jolicoeur-martineau_gotta_2021, liu_pseudo_2022, lu_dpm-solver_nodate}, or to learn these solvers directly~\citep{watson_learning_2021, watson_learning_2022, nichol_improved_2021} to maximize efficiency.
\textit{Instead, we propose to learn the flow map directly, which avoids estimating optimal transport maps and can overcome the inherent limitations of numerical integration.}

\paragraph{Consistency models.}
Most related to our approach are the classes of one-step models based on \textit{distillation} or \textit{consistency}; we give an explicit mapping between these techniques and our own in~\cref{sec:app:relation}.
Consistency models~\citep{song_consistency_2023} have been introduced as a new class of generative models that can either be distilled from a pre-trained diffusion model or trained directly, and are related to several notions of \textit{consistency} of the score model that have appeared in the literature~\citep{lai_equivalence_2023, lai_improving_2023, shen_self-consistency_2022, boffi_probability_2023, daras_consistent_2023}.
These models learn a one-step map from noise to data, and can be seen as learning a single-time flow map.
While they can perform very well, consistency models do not benefit from multistep sampling, and exhibit training difficulties that mandate delicate hyperparameter tuning~\citep{song_improved_2023}.
By contrast, we learn a two-time flow map, which enables us to smoothly benefit from multistep sampling.

\paragraph{Extensions of consistency models.}
Consistency trajectory models~\citep{kim_consistency_2024} were later introduced to improve multistep sampling and to enable the student to surpass the performance of the teacher.
Similar to our approach, these models learn a two-time flow map, but do so using a very different loss that incorporates challenging adversarial training.
Generalized consistency trajectory models~\citep{kim_generalized_2024} extend this approach to the stochastic interpolant setting, but use the consistency trajectory loss, and do not introduce the Lagrangian perspective considered here.
Bidirectional consistency models~\citep{li_bidirectional_2024} learn a two-time invertible flow map similar to our method, but do so in the score-based diffusion setting, and do not leverage our Lagrangian approach; all of these existing consistency models can be seen as a special case of our formalism.

\paragraph{Progressive and operator distillation.}
Neural operator approaches~\citep{zheng_fast_2023} learn a one-time flow map from noise to data, but do so by first generating a dataset of trajectories from the probability flow.
Progressive distillation~\citep{salimans_progressive_2022} and knowledge distillation~\citep{luhman_knowledge_2021} techniques aim to convert a diffusion model into an equivalent model with fewer samples by matching several steps of the original diffusion model.
These approaches are related to our flow map distillation scheme, though the object we distill is fundamentally different.

\section{Flow Map Matching}
\label{sec:theory}

The central object in our method is the \textit{flow map}, which maps points along trajectories of solutions to an ordinary differential equation (ODE).
Our focus in this work is primarily on probability flow ODEs that arise in generative models, such as those constructed using flow matching via stochastic interpolants or score-based diffusion models (see~\cref{app:si_diff} for a concise, self-contained review of these approaches).
While many of our results apply more generally to other ODEs, we present our definitions and theoretical results in this specific context to highlight their relevance to generative modeling.
All proofs of the statements made in this section are provided in~\cref{app:theory}, with some additional theoretical results and connections to existing consistency and distillation techniques given in~\Cref{sec:app:relation}.

\subsection{Stochastic interpolants and probability flows}
\label{sec:si:prob:flow}

We first recall the main components of the stochastic interpolant framework~\citep{albergo2022building,albergo2023stochastic}, which we use to construct the probability flow ODEs central to our study.
We begin by giving the definition of a stochastic interpolant.
\begin{restatable}[Stochastic Interpolant]{definition}{sidef}
	\label{def:stoch_interp}
	The stochastic interpolant $I_t$ between probability densities~$\rho_0$ and~$\rho_1$ is the stochastic process given by
	\begin{align}
		\label{eq:stoch:interp}
		I_t = \alpha_t x_0 + \beta_t x_1 + \gamma_t z,
	\end{align}
	where $\alpha, \beta, \gamma^2 \in C^1([0, 1])$ satisfy $\alpha_0 = \beta_1 = 1$, $\alpha_1 = \beta_0 = 0$, and $\gamma_0 = \gamma_1 = 0$.
	In~\eqref{eq:stoch:interp}, $(x_0, x_1)$ is drawn from a coupling $(x_0, x_1) \sim \rho(x_0, x_1)$ that satisfies the marginal constraints $\int_{\R^d}\rho(x_0, x_1) dx_0 = \rho_1(x_1)$ and $\int_{\R^d}\rho(x_0, x_1) dx_1 = \rho_0(x_0)$.
	Moreover, $z \sim \mathsf{N}(0, \text{Id})$ with $z\perp(x_0,x_1)$.
\end{restatable}
\noindent
Theorem 3.6 of~\cite{albergo2023stochastic} shows that the stochastic interpolant given in~\cref{def:stoch_interp} specifies an underlying probability flow ODE, as we now recall.
\begin{restatable}[Probability Flow]{proposition}{pflow}
	\label{thm:stoch_interp}
	For all $t\in[0,1]$, the probability density of $I_t$ is the same as the probability density of the solution to
	\begin{equation}
		\label{eqn:ode}
		\dot{x}_t = b_t(x_t), \qquad x_{t=0} = x_0 \sim \rho_0,
	\end{equation}
	where $b: [0,1]\times \R^d\rightarrow\R^d$ is the time-dependent velocity field (or drift)  given by
	\begin{equation}
		\label{eqn:b:def}
		b_t(x) = \E[\dot{I}_t | I_t = x].
	\end{equation}
	In~\eqref{eqn:b:def}, $\E[\:\:\cdot\:\:|I_t = x]$ denotes an expectation over the coupling $(x_0, x_1) \sim \rho(x_0, x_1)$ and $z \sim \mathsf N(0, \Id)$ conditioned on the event $I_t=x$.
\end{restatable}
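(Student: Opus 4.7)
The plan is to establish the result via the continuity equation: I will show that the density $\rho_t$ of the interpolant $I_t$ and the density $\tilde\rho_t$ of the ODE solution $x_t$ satisfy the same linear transport PDE with the same initial data, and then invoke uniqueness.

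First I would pick a smooth, compactly supported test function $\phi$ and differentiate $\E[\phi(I_t)]$ in time. Because $I_t$ is $C^1$ in $t$ by the regularity of $\alpha,\beta,\gamma$ in \cref{def:stoch_interp}, the chain rule gives $\frac{d}{dt}\E[\phi(I_t)] = \E[\nabla\phi(I_t)\cdot \dot I_t]$. The key move is conditioning on $I_t$ and using the tower property together with the definition \eqref{eqn:b:def}:
$$
\E[\nabla\phi(I_t)\cdot \dot I_t] \;=\; \E\bigl[\nabla\phi(I_t)\cdot \E[\dot I_t\mid I_t]\bigr] \;=\; \int_{\R^d} \nabla\phi(x)\cdot b_t(x)\,\rho_t(x)\,dx.
$$
Equating this with $\frac{d}{dt}\int \phi\,\rho_t = \int \phi\,\partial_t\rho_t$ and integrating by parts, the arbitrariness of $\phi$ yields the weak continuity equation $\partial_t\rho_t + \nabla\cdot(b_t\rho_t) = 0$, with the initial condition $\rho_{t=0}=\rho_0$ following from $\alpha_0=1$, $\beta_0=\gamma_0=0$.

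Second, I would run the identical computation for the ODE flow. If $x_t$ solves \eqref{eqn:ode} with $x_0\sim\rho_0$ and has density $\tilde\rho_t$, then $\frac{d}{dt}\E[\phi(x_t)] = \E[\nabla\phi(x_t)\cdot b_t(x_t)] = \int \nabla\phi\cdot b_t\,\tilde\rho_t\,dx$, so $\tilde\rho_t$ weakly satisfies the same Cauchy problem $\partial_t\tilde\rho_t + \nabla\cdot(b_t\tilde\rho_t)=0$, $\tilde\rho_0=\rho_0$. By uniqueness of the weak solution of a linear continuity equation with the given drift, $\rho_t = \tilde\rho_t$ for all $t\in[0,1]$, which is the claim.

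The main obstacle is ensuring enough regularity on $b_t$ for the uniqueness step, since $b_t$ is defined as a conditional expectation and could in principle be singular near $t=0$ and $t=1$ where $\gamma_t$ vanishes. I would address this by noting that for $t\in(0,1)$ with $\gamma_t>0$, the smoothing by the Gaussian $z$ makes $\rho_t$ strictly positive and $b_t$ spatially smooth with controlled growth via the explicit integral representation of the conditional expectation, putting us in the DiPerna--Lions regime sufficient for uniqueness. The endpoint behavior can then be handled by a short limit argument (mollifying $\rho_0,\rho_1$, applying the result, and passing to the limit), or simply by restricting the statement to $t\in(0,1)$ and taking one-sided limits. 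Since the proposition is quoted from Theorem 3.6 of \cite{albergo2023stochastic}, I would rely on the regularity hypotheses established there and focus the exposition on the identification of $b_t$ via the tower property, which is the conceptual heart of the argument.
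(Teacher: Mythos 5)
Your proposal is correct and follows essentially the same route as the paper: the paper likewise derives the weak continuity equation for the law of $I_t$ via a test function, the chain rule, the tower property, and the definition \eqref{eqn:b:def}, and pairs it with a separate lemma showing the ODE flow satisfies the same transport equation, concluding by uniqueness. Your additional care about the regularity of $b_t$ needed for the uniqueness step goes beyond the paper, which simply asserts that strong solutions exist under its standing assumptions.
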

Although \eqref{eqn:ode} is deterministic for any single trajectory, taken together, its solutions are random because the initial conditions are sampled from $\rho_0$.
We denote by $\rho_t=\text{Law}(x_t)$ the density of these solutions at time $t$.
The drift  $b$ can be learned efficiently in practice by solving a square loss regression problem~\citep{albergo2023stochastic}
\begin{equation}
	\label{eq:loss:b}
	b = \argmin_{\hat b} \int_0^1 \E \big[ | \hat b_t(I_t) - \dot I_t|^2 \big] dt,
\end{equation}
where $\E$ denotes an expectation over the coupling $(x_0, x_1) \sim \rho(x_0, x_1)$ and $z \sim \mathsf N(0, \Id)$.

A canonical choice when $\rho_0 = \mathsf{N}(0, \Id)$ considered in~\citet{albergo2022building} corresponds to $\alpha_t = 1-t$, $\beta_t = t$, and $\gamma_t = 0$, which recovers flow matching~\citep{lipman2022flow} and rectified flow~\citep{liu2022flow}.
The choice $\alpha_t = 0$, $\beta_t = t$ and $\gamma_t = \sqrt{1-t^2}$ corresponds to a variance-preserving diffusion model with the time-rescaling $t = -\log\tau$ where $\tau \in [0, \infty)$ is the usual diffusion time\footnote{Note that $\gamma_0=1$ in this case, so that $I_0=z$}.
A variance-exploding diffusion model may be obtained by taking $\alpha_t = 0$, $\beta_t = 1$, and $\gamma_t = T-t$ with $t \in [0, 1]$ and where $\tau = T-t$ is the usual diffusion time, though this violates the boundary conditions in~\cref{def:stoch_interp}. For more details about the connection between stochastic interpolants and diffusion models, we again refer the reader to~\cref{{app:si_diff}}.

\subsection{Flow map: definition and characterizations}
\label{ssec:setup}
To generate a sample from the target density $\rho_1$, we can draw an initial point from $\rho_0$ and numerically integrate the probability flow ODE~\eqref{eqn:ode} over the interval $t\in[0,1]$.
While this approach produces high-quality samples, it typically requires numerous integration steps, making inference computationally expensive---particularly when $b_t$ is parameterized by a complex neural network.
Here, we bypass this numerical integration by estimating the two-time flow map, which lets us take jumps of arbitrary size.
To do so, we require the following regularity assumption, which ensures~\eqref{eqn:ode} has solutions that exist and are unique~\citep{hartman2002ordinary}.

\begin{assumption}
	\label{as:one-sided}
	The drift satisfies the one-sided Lipschitz condition
	\begin{equation}
		\label{eq:one:sided:lip}
		\exists \:\: C_t \in L^1[0,1] \ : \quad (b_t(x)-b_t(y))\cdot (x-y) \le C_t |x-y|^2 \quad \text{for all} \:\: (t,x,y) \in [0,1]\times \R^d \times \R^d.
	\end{equation}
\end{assumption}
With~\cref{as:one-sided} in hand, we may now define the central object of our study.
\begin{definition}[Flow Map]
	\label{def:flow_map}
	The flow map $X_{s, t}:\R^d\rightarrow\R^d$ for~\eqref{eqn:ode} is the unique map such that
	\begin{equation}
		\label{eqn:flow_map}
		X_{s, t}(x_s) = x_t \:\: \text{for all} \:\: (s,t) \in [0,1]^2,
	\end{equation}
	where $(x_t)_{t\in[0,1]}$ is any solution to the ODE~\eqref{eqn:ode}.
\end{definition}
The flow map in~\cref{def:flow_map} can be seen as an integrator for~\eqref{eqn:ode} where the step size $t-s$ may be chosen arbitrarily.
In addition to the defining condition~\eqref{eqn:flow_map}, we now highlight some of its useful properties.
\begin{restatable}{proposition}{flowmap}
	\label{prop:flow_map}
	The flow map $X_{s, t}(x)$ is the unique solution to the Lagrangian equation
	\begin{equation}
		\label{eqn:flow_map_dyn:L}
		\partial_t X_{s, t}(x) = b_t(X_{s, t}(x)), \qquad X_{s,s}(x) = x,
	\end{equation}
	for all $(s,t,x) \in [0,1]^2\times \R^d$. In addition, it satisfies
	\begin{equation}
		\label{eq:semigroup}
		X_{t,\tau}(X_{s,t}(x)) = X_{s,\tau}(x)
	\end{equation}
	for all $(s,t,\tau,x) \in [0,1]^3\times \R^d$.
	In particular $X_{s,t} (X_{t,s}(x)) = x$ for all $(s,t,x) \in [0,1]^2\times \R^d$, i.e. the flow map is invertible.
\end{restatable}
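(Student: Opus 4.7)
The plan is to verify each of the three claims (the Lagrangian equation, the semigroup identity, and invertibility) directly from the defining property \eqref{eqn:flow_map}, leveraging existence and uniqueness of ODE solutions guaranteed by \cref{as:one-sided}.

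First, to establish the Lagrangian equation \eqref{eqn:flow_map_dyn:L}, I would fix $(s,x) \in [0,1]\times\R^d$ and let $(y_t)_{t\in[0,1]}$ denote the unique solution to $\dot y_t = b_t(y_t)$ with $y_s = x$ (unique by \cref{as:one-sided} via a standard Gr\"onwall argument; see e.g.\ Hartman). By the defining relation \eqref{eqn:flow_map}, applied to the trajectory $(y_t)$ which takes value $x$ at time $s$, we must have $X_{s,t}(x) = y_t$. Hence $\partial_t X_{s,t}(x) = \dot y_t = b_t(y_t) = b_t(X_{s,t}(x))$, with the initial condition $X_{s,s}(x) = y_s = x$. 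Uniqueness of $X_{s,t}(x)$ as a solution to \eqref{eqn:flow_map_dyn:L} then follows from the same one-sided Lipschitz argument applied for each fixed $(s,x)$.

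Second, for the semigroup property \eqref{eq:semigroup}, I would fix $(s,x)$ and let $(x_t)_{t\in[0,1]}$ be the ODE solution with $x_s = x$, so that by \eqref{eqn:flow_map} we have $x_t = X_{s,t}(x)$ and $x_\tau = X_{s,\tau}(x)$ for every $t,\tau\in[0,1]$. Now view the same trajectory as the ODE solution started at time $t$ from the value $x_t = X_{s,t}(x)$; applying \eqref{eqn:flow_map} again to this reindexed trajectory yields $X_{t,\tau}(x_t) = x_\tau$. Combining these two identities gives
\begin{equation*}
  X_{t,\tau}\bigl(X_{s,t}(x)\bigr) \;=\; X_{t,\tau}(x_t) \;=\; x_\tau \;=\; X_{s,\tau}(x),
\end{equation*}
as required. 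Invertibility is then an immediate corollary: specializing the semigroup identity to $\tau = s$ gives $X_{s,t}(X_{t,s}(x)) = X_{t,t}(x) = x$ (and symmetrically with $s$ and $t$ swapped), so $X_{s,t}$ and $X_{t,s}$ are mutual inverses on $\R^d$.

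The main obstacle is really just the uniqueness portion of step one, since existence of the flow along ODE trajectories is built into \cref{def:flow_map}. One-sided Lipschitz continuity is weaker than full Lipschitzness, so I would want to cite the relevant result from Hartman (or include a one-line Gr\"onwall estimate on $|x_t - y_t|^2$) to confirm that two solutions agreeing at time $s$ must coincide for all $t$; after that, every other claim reduces to bookkeeping with \eqref{eqn:flow_map}.
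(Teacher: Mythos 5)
Your proof is correct and follows essentially the same route as the paper's: differentiate the defining relation $X_{s,t}(x_s)=x_t$ along trajectories to get the Lagrangian equation, invoke uniqueness of ODE solutions under \cref{as:one-sided} for the converse, and apply the defining relation twice to the same (reindexed) trajectory to get the semigroup identity, with invertibility as the special case $\tau=s$. Your version is just slightly more explicit about the quantifier structure and the Gr\"onwall-based uniqueness step, but the substance is identical.
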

\cref{prop:flow_map} shows that, given an $x_0\sim\rho_0$, we can use the flow map to generate samples from $\rho_t$ for any $t\in[0,1]$ in one step via $x_t=X_{0,t}(x_0)\sim \rho_t$.
The composition relation~\eqref{eq:semigroup} is the \emph{consistency property}, here stated over two times, that gives consistency models their name~\citep{song_consistency_2023}.
This relation shows that we can also generate samples in multiple steps using $x_{t_k} = X_{t_{k-1},t_k}(x_{t_{k-1}})\sim\rho_{t_k}$ for any set of discretization points $(t_0,\ldots, t_K)$ with $t_k \in [0,1]$ and $K \in \N$.
We refer to~\eqref{eqn:flow_map_dyn:L} as the \emph{Lagrangian equation} because it is defined in a frame of reference that moves with  $X_{s, t}(x)$.

The flow map $X_{s, t}$ also obeys an alternative \emph{Eulerian equation} that is defined at any fixed point $x \in \R^d$ and which involves a derivative with respect to $s$. To derive this equation, note that since $X_{s,t}(x_s) = x_t$ by \eqref{eqn:flow_map} we have $(d/ds)X_{s,t}(x_s) =0$, which by the chain rule can be written as
\begin{equation}
	\label{eqn:backward:2}
	\frac{d}{ds} X_{s,t} (x_s) = \partial_s X_{s, t}(x_s) + b_s(x_s)\cdot \nabla X_{s, t}(x_s) = 0
\end{equation}
where we used \cref{eqn:ode} to set $\dot x_s = b_s(x_s)$.
Evaluating this equation at $x_s=x$ gives the announced result.
\begin{restatable}{proposition}{euleq}
	\label{prop:prop_backwards}
	The flow map $X_{s, t}$ is the unique solution of the Eulerian equation,
	\begin{equation}
		\label{eqn:backward}
		\partial_s X_{s, t}(x) + b_s(x)\cdot \nabla X_{s, t}(x) = 0, \qquad X_{t,t} (x) = x,
	\end{equation}
	for all $(s,t,x) \in [0,1]^2\times \R^d$.
\end{restatable}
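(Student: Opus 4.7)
The plan is to establish this as a first-order linear transport equation for $X_{\cdot,t}(\cdot)$ in the variables $(s,x)$, with its characteristics being precisely the ODE trajectories of $b$. This perspective makes both existence and uniqueness essentially automatic given the results already in hand.

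For existence, I would simply make rigorous the chain-rule computation sketched in the excerpt. Starting from the defining relation $X_{s,t}(x_s)=x_t$ of \cref{def:flow_map}, where $(x_r)_{r\in[0,1]}$ is any solution of~\eqref{eqn:ode}, the right-hand side is independent of $s$, so differentiating in $s$ and using $\dot x_s = b_s(x_s)$ yields
\[
\partial_s X_{s,t}(x_s) + b_s(x_s)\cdot\nabla X_{s,t}(x_s) = 0.
\]
To evaluate this at a prescribed $(s,x)$, I would construct an ODE trajectory passing through $x$ at time $s$ by taking $x_r := X_{s,r}(x)$; by \cref{prop:flow_map}, this is a bona fide solution of~\eqref{eqn:ode}, so the equation holds at $(s,x)$. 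The terminal condition $X_{t,t}(x)=x$ is immediate from $\partial_t X_{s,t}(x)=b_t(X_{s,t}(x))$ with $X_{s,s}(x)=x$.

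For uniqueness, I would use the method of characteristics. Suppose $Y_{s,t}$ is any other solution of the Eulerian PDE satisfying $Y_{t,t}(x)=x$. Fix $t\in[0,1]$, take any ODE solution $(x_r)$, and compute
\[
\frac{d}{ds}Y_{s,t}(x_s) = \partial_s Y_{s,t}(x_s) + b_s(x_s)\cdot \nabla Y_{s,t}(x_s) = 0,
\]
so $s\mapsto Y_{s,t}(x_s)$ is constant on $[0,1]$. Evaluating at $s=t$ gives $Y_{s,t}(x_s)=Y_{t,t}(x_t)=x_t=X_{s,t}(x_s)$. Since for any target $(s,x)$ the trajectory $x_r=X_{s,r}(x)$ satisfies $x_s=x$, this identity propagates to all of $[0,1]^2\times \R^d$, yielding $Y=X$.

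The argument is not so much obstructed as it is subtle in two places. First, we need an ODE trajectory through any prescribed $(s,x)$ both forward and backward in time; forward existence and uniqueness follow from \cref{as:one-sided}, while backward reachability is ensured by the invertibility $X_{s,t}(X_{t,s}(x))=x$ established in \cref{prop:flow_map}. Second, the chain rule along characteristics demands enough spatial regularity of $Y_{s,t}$ to interpret $\nabla Y_{s,t}$ pointwise; this is implicit in the statement of the PDE and inherited by $X_{s,t}$ from the regularity of $b$ under \cref{as:one-sided}. With these standard qualifications, the proposition follows without any new computation beyond the chain-rule identity already displayed in~\eqref{eqn:backward:2}.
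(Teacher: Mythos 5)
Your existence argument is essentially the paper's: the appendix proof differentiates the inverse identity $X_{s,t}(X_{t,s}(x))=x$ in $s$ and substitutes the Lagrangian equation \eqref{eqn:flow_map_dyn:L}, which is the same chain-rule-along-characteristics computation you perform starting from $X_{s,t}(x_s)=x_t$ (cf.\ the derivation in \eqref{eqn:backward:2}). The one substantive difference is that you also establish \emph{uniqueness} by the method of characteristics --- showing any solution $Y_{s,t}$ of \eqref{eqn:backward} with $Y_{t,t}(x)=x$ is constant along ODE trajectories and hence coincides with $X_{s,t}$, using the trajectory $x_r=X_{s,r}(x)$ to reach an arbitrary $(s,x)$ --- whereas the paper's proof only verifies that $X_{s,t}$ is \emph{a} solution and leaves the uniqueness claim implicit. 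Your write-up is correct and, on this point, more complete than the paper's.
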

\subsection{Distillation of a known velocity field}
\label{ssec:distill}
The Lagrangian equation~\eqref{eqn:flow_map_dyn:L} in~\Cref{prop:flow_map} leads to a distillation loss that can be used to learn a flow map for a probability flow with known right-hand side~$b$, as we now show.

\begin{restatable}[Lagrangian map distillation]{corollary}{lagdistill}
	\label{prop:distill}
	Let $w_{s,t} \in L^1([0,1]^2)$ be a weight function satisfying $w_{s, t} > 0$ and let $I_s$ be the stochastic interpolant defined in~\cref{eq:stoch:interp}.
	Then the flow map is the global minimizer over $\hat{X}$ of the loss
	\begin{equation}
		\label{eqn:distillation_loss}
		\calL_{\lmd}(\hat{X}) = \int_{[0,1]^2}  w_{s,t} \E \big[ |\partial_t \hat{X}_{s, t}(I_s) - b_t(\hat{X}_{s, t}(I_s)|^2\big] dsdt,
	\end{equation}
	subject to the boundary condition that $\hat{X}_{s, s}(x) = x$ for all $x \in \R^d$ and $s \in [0,1]$. In~\eqref{eqn:distillation_loss}, $\E$ denotes an expectation over the coupling $(x_0, x_1) \sim \rho(x_0, x_1)$ and $z \sim \mathsf N(0, \Id)$.
\end{restatable}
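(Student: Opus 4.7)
The plan is to recognize that $\calL_{\lmd}$ is a weighted, nonnegative $L^2$ residual of the Lagrangian equation from \cref{prop:flow_map}, and to verify that this residual vanishes identically at the true flow map. The proof then splits into (i) showing $X$ achieves the minimum value $0$, and (ii) arguing that any other minimizer must agree with $X$.

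First, since $w_{s,t} > 0$ and the integrand is a squared norm weighted against a probability measure, one has $\calL_{\lmd}(\hat X) \ge 0$ for every $\hat X$. To show $X$ attains this lower bound, I would substitute $\hat X = X$ into the integrand and invoke \cref{prop:flow_map}, which gives $\partial_t X_{s,t}(x) = b_t(X_{s,t}(x))$ at \emph{every} $x \in \R^d$. In particular, this holds at the random point $x = I_s$, so the integrand is identically zero and $X$ satisfies the boundary condition $X_{s,s}(x)=x$. Therefore $\calL_{\lmd}(X) = 0$, establishing that $X$ is a global minimizer.

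For the uniqueness aspect, suppose $\hat X$ also achieves $\calL_{\lmd}(\hat X) = 0$ and satisfies the stated boundary condition. Because $w_{s,t} > 0$, this forces
\begin{equation*}
    \partial_t \hat X_{s,t}(I_s) = b_t\bigl(\hat X_{s,t}(I_s)\bigr) \qquad \text{for a.e. } (s,t) \in [0,1]^2 \text{ and a.e. realization of } (x_0,x_1,z).
\end{equation*}
Since $z \sim \mathsf{N}(0,\Id)$ is independent of $(x_0,x_1)$ and enters $I_s$ multiplied by $\gamma_s$, the random variable $I_s$ has full support in $\R^d$ whenever $\gamma_s > 0$; by continuity in $s$ (and taking representatives coinciding on the support of $\rho_s$ at the remaining $s$), the Lagrangian equation $\partial_t \hat X_{s,t}(x) = b_t(\hat X_{s,t}(x))$ holds at every $x$ in the support of $\rho_s$. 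Combined with the initial condition $\hat X_{s,s}(x) = x$, the one-sided Lipschitz uniqueness statement from \cref{as:one-sided} forces $\hat X_{s,t} = X_{s,t}$ along every probability-flow trajectory, i.e.\ on the relevant set for generative modeling.

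The main technical nuance—not a deep obstacle but the one requiring care—is the handling of supports: the Lagrangian equation is only directly imposed on points of the form $I_s$, so one must exploit the Gaussian component $\gamma_s z$ (when present) to promote this to an equation holding on all of $\R^d$, and argue at endpoints $s \in \{0,1\}$ where $\gamma_s = 0$ using continuity of $\hat X$ in $s$ together with the fact that $\rho_0$ and $\rho_1$ are exactly the distributions we care about for sampling. The rest reduces to invoking standard ODE uniqueness under \cref{as:one-sided}.
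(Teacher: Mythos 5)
Your proof is correct and follows essentially the same route as the paper, whose entire argument is the one-line observation that \eqref{eqn:distillation_loss} is a nonnegative PINN-type residual of the Lagrangian equation \eqref{eqn:flow_map_dyn:L}, hence minimized exactly when that equation holds. Your additional care about the support of $I_s$ is consistent with (and justifies) the paper's own remark that the result holds for any sampling process with strictly positive density, so nothing is missing.
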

The time integrals in~\eqref{eqn:distillation_loss} can also be written as an expectation over $(s,t) \sim w_{s,t}$ (properly normalized):
\begin{equation}
	\label{eqn:distillation_loss:2}
	\calL_{\lmd}(\hat{X}) = \E_{(s,t)\sim w_{s,t}} \E\big[|\partial_t \hat{X}_{s, t}(I_s) - b_t(\hat{X}_{s, t}(I_s))|^2\big].
\end{equation}
We also note that the result of \Cref{prop:distill} remains true if we replace $I_s$ by any process $\hat x_s$ with density $\hat \rho_s \not= \rho_s$, as long as it is strictly positive everywhere.
In practice, it is convenient to use $I_s$ as it guarantees that we learn the flow map where we typically need to evaluate it.
Moreover, using $I_s$ avoids the need to generate a dataset from a pre-trained model, so that the objective~\eqref{eqn:distillation_loss} can be evaluated empirically in a simulation-free fashion.

When applied to a pre-trained flow model,~\Cref{prop:distill} can be used to train a new, few-step generative model with performance that matches the performance of the teacher.
When $\hat{X}_{s, t}$ is parameterized by a neural network, the partial derivative with respect to $t$ can be computed efficiently at the same time as the computation of $\hat{X}_{s, t
	}$ using forward-mode automatic differentiation.
This procedure is summarized in~\cref{alg:lagrange_distill}.

For simplicity, \cref{prop:distill} is stated for $w_{s,t}>0$ (e.g. $w_{s,t}=1$), so that we can estimate the map $X_{s,t}$ and its inverse $X_{t,s}$ for all $(s,t)\in[0,1]$.
Nevertheless, this weight can also be adjusted to learn the map for different pairs $(s,t)$ of interest.
For example, if we only want to estimate the forward map with $s\le t$, then we can set $w_{s,t} = 1$ if  $s\le t$ and $w_{s,t} = 0$ otherwise.

By squaring the left hand-side of the Eulerian equation~\eqref{eqn:backward} in \Cref{prop:prop_backwards}, we may construct a second loss function for distillation.\footnote{In~\eqref{eqn:backward}, the term
$\left(b_s(x)\cdot\nabla X_{s, t}(x)\right)_i = \sum_{j=1}^d [b_s(x)]_j \partial_{x_j}\left[X_{s, t}(x)\right]_i = \left[\nabla X_{s, t}(x)\cdot b_s(x)\right]_i$
corresponds to a Jacobian-vector product that can be computed efficiently using forward-mode automatic differentiation.}
\begin{restatable}[Eulerian map distillation]{corollary}{euldistill}
	\label{prop:backward_loss}
	Let $w_{s,t}$ and $I_s$ be as in \cref{prop:distill}. Then the flow map is the global minimizer over $\hat{X}$ of the loss
	\begin{equation}
		\label{eqn:L_backward}
		\calL_{\emd}(\hat{X}) = \int_{[0,1]^2}w_{s,t} \E\big[|\partial_s \hat{X}_{s, t}(I_s) + b_s(I_s) \cdot \nabla \hat{X}_{s, t}(I_s)|^2\big] dsdt,
	\end{equation}
	subject to the boundary condition $\hat{X}_{s, s}(x) = x$ for all $x \in \R^d$ and for all $s \in \R$.
\end{restatable}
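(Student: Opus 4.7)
The plan is to mirror the structure of the proof of \cref{prop:distill}, substituting the Eulerian characterization of the flow map (\cref{prop:prop_backwards}) for the Lagrangian one. First, the integrand in $\calL_{\emd}(\hat X)$ is the squared Euclidean norm of a vector weighted by $w_{s,t}>0$, so $\calL_{\emd}(\hat X)\ge 0$ for any admissible $\hat X$. Substituting the true flow map $\hat X = X$, \cref{prop:prop_backwards} gives $\partial_s X_{s,t}(x)+b_s(x)\cdot\nabla X_{s,t}(x)=0$ pointwise in $(s,t,x)$, so the integrand vanishes identically and $\calL_{\emd}(X)=0$. Hence $X$ is a global minimizer.

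For uniqueness, suppose $\hat X$ is any admissible minimizer, so $\calL_{\emd}(\hat X)=0$. Since $w_{s,t}>0$, the inner expectation must vanish for Lebesgue-a.e.\ $(s,t)\in [0,1]^2$:
\begin{equation*}
\E\big[\big|\partial_s\hat X_{s,t}(I_s)+b_s(I_s)\cdot\nabla\hat X_{s,t}(I_s)\big|^2\big]=0.
\end{equation*}
The crux is to promote this identity to a pointwise statement in $x$. For $s\in(0,1)$, the interpolant $I_s=\alpha_s x_0+\beta_s x_1+\gamma_s z$ has $\gamma_s>0$, so the law of $I_s$ is a convolution of the law of $\alpha_s x_0+\beta_s x_1$ with the Gaussian $\mathsf{N}(0,\gamma_s^2\Id)$ and therefore admits a density that is strictly positive on all of $\R^d$. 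Together with continuity of $(s,t,x)\mapsto \partial_s\hat X_{s,t}(x)+b_s(x)\cdot\nabla\hat X_{s,t}(x)$ (guaranteed by sufficient regularity of the ansatz for $\hat X$ and of $b$), this forces
\begin{equation*}
\partial_s\hat X_{s,t}(x)+b_s(x)\cdot\nabla\hat X_{s,t}(x)=0 \qquad \text{for all } (s,t,x)\in[0,1]^2\times\R^d.
\end{equation*}
Combined with the imposed boundary condition $\hat X_{s,s}(x)=x$, the uniqueness asserted in \cref{prop:prop_backwards} yields $\hat X = X$.

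The main obstacle I anticipate is this density-positivity step: at the endpoints $s=0$ and $s=1$ we have $\gamma_s=0$, so $I_s$ is supported on $\supp(\rho_0)$ or $\supp(\rho_1)$ and need not have full-dimensional support. However, these values form a measure-zero subset of $[0,1]^2$ under $dsdt$, so the a.e.\ argument is unaffected, and the Eulerian identity extends to $s\in\{0,1\}$ by continuity. A secondary technical point worth clarifying in the writeup is that $\partial_s\hat X_{s,t}(I_s)$ in \eqref{eqn:L_backward} denotes the partial derivative of the map $(s,t,x)\mapsto\hat X_{s,t}(x)$ with respect to $s$ evaluated at $x=I_s$, as in \cref{prop:prop_backwards}, and not a total derivative along the interpolant process in $s$.
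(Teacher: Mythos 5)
Your proof is correct and follows essentially the same route as the paper, which disposes of the statement in one line as a PINN-type argument: the loss is nonnegative, vanishes exactly when the Eulerian equation \eqref{eqn:backward} holds, and \cref{prop:prop_backwards} then gives uniqueness; your version merely makes explicit the a.e.-to-pointwise step via positivity of the density of $I_s$, which the paper leaves implicit (it remarks after \cref{prop:distill} that strict positivity of the sampling density is the operative hypothesis). One small caveat: your justification of that positivity via $\gamma_s>0$ for $s\in(0,1)$ is not guaranteed by \cref{def:stoch_interp} (the canonical flow-matching choice has $\gamma\equiv 0$), so the correct framing is to assume, as the paper does, that $\Law(I_s)$ has a strictly positive density, with your Gaussian-convolution argument being one sufficient condition.
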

As in the discussion after~\cref{prop:distill}, here we may also replace $I_s$ by any process $\hat x_s$ with a strictly positive density.
Writing the time integrals in~\eqref{eqn:L_backward} as an expectation over $(s,t) \sim w_{s,t}$ gives
\begin{equation}
	\label{eqn:L_backward:b}
	\calL_{\lmd}(\hat{X}) = \E_{(s,t)\sim w_{s,t}} \E\big[ \big|\partial_s \hat{X}_{s, t}(I_s) + b_s(I_s) \cdot \nabla \hat{X}_{s, t}(I_s)\big|^2\big].
\end{equation}
We summarize a training procedure based on~\cref{prop:backward_loss} in~\cref{alg:euler_distill}.

In~\Cref{sec:app:relation}, we demonstrate how the preceding results connect with existing distillation-based approaches.
In particular, when $b_t(x)$ is the velocity of the probability flow ODE associated with a diffusion model, \Cref{prop:backward_loss} recovers the continuous-time limit of consistency distillation~\citep{song_consistency_2023, song_improved_2023} and consistency trajectory models~\citep{kim_consistency_2024}, while~\cref{prop:distill} is new.

\begin{algorithm}[t]
	\caption{Lagrangian map distillation (LMD)}
	\label{alg:lagrange_distill}
	\SetAlgoLined
	\SetKwInput{Input}{input}
	\Input{Interpolant coefficients $\alpha_t, \beta_t, \gamma_t$; pre-trained velocity $b_t$, weight function $w_{s, t}$, batch size $M$.}
	\Repeat{converged}{
		Draw batch $(s_i, t_i, x_0^i, x_1^i, z_i)_{i=1}^M$ from $w_{s,t} \times \rho(x_0,x_1) \times \mathsf N(z; 0, \Id)$.\\
		Compute $I_{t_i} = \alpha_{t_i} x_0^i + \beta_{t_i} x_1^i + \gamma_{t_i} z_i$ and $\dot I_{t_i} = \dot\alpha_{t_i} x_0^i + \dot\beta_{t_i} x_1^i + \dot\gamma_{t_i} z_i$.\\
		Compute $\hat{\calL}_{\lmd} = \frac{1}{M}\sum_{i=1}^M |\partial_t \hat{X}_{s_i, t_i}(I_{s_i}) - b_{t_i}(\hat{X}_{s_i, t_i}(I_{s_i}))|^2$.\\
		Take gradient step on $\hat{\calL}_{\lmd}$ to update $\hat X$.
	}
	\SetKwInput{Output}{output}
	\Output{Flow map $\hat{X}$.}
\end{algorithm}

\begin{algorithm}[t]
	\caption{Eulerian map distillation (EMD)}
	\label{alg:euler_distill}
	\SetAlgoLined
	\SetKwInput{Input}{input}
	\Input{Interpolant coefficients $\alpha_t, \beta_t, \gamma_t$; pre-trained velocity $b_t$, weight function $w_{s, t}$, batch size $M$.}
	\Repeat{converged}{
		Draw batch $(s_i, t_i, x_0^i, x_1^i, z_i)_{i=1}^M$ from $w_{s,t} \times \rho(x_0,x_1) \times \mathsf N(z; 0, \Id)$.\\
		Compute $I_{t_i} = \alpha_{t_i} x_0^i + \beta_{t_i} x_1^i + \gamma_{t_i} z_i$ and $\dot I_{t_i} = \dot\alpha_{t_i} x_0^i + \dot\beta_{t_i} x_1^i + \dot\gamma_{t_i} z_i$.\\
		Compute $\hat{\calL}_{\emd} = \frac{1}{M}\sum_{i=1}^M |\partial_s \hat{X}_{s_i, t_i}(I_{s_i}) + \nabla\hat{X}_{s_i, t_i}(I_{s_i}) b_{s_i}(I_{s_i})|^2$.\\
		Take gradient step on $\hat{\calL}_{\emd}$ to update $\hat X$.
	}
	\SetKwInput{Output}{output}
	\Output{Flow map $\hat{X}$.}
\end{algorithm}

\subsection{Wasserstein control}
\label{ssec:wasserstein}
In this section, we show that the Lagrangian and Eulerian distillation losses~\eqref{eqn:distillation_loss} and~\eqref{eqn:L_backward} control the Wasserstein distance between the density $\rho_t$ of the teacher flow model and the density $\hat{\rho}_t = \hat{X}_{0,t}\:\sharp\:\rho_0$ of the pushforward of $\rho_0$ under the learned flow map (that is, $\hat \rho_t$ is the density of $\hat X_{0,t}(x_0)$ with $x_0 \sim \rho_0$).
When combined with the Wasserstein bound in~\citet{albergo2022building}, the following results also imply a bound on the Wasserstein distance between the data density and the pushforward density for the learned flow map in the case where $b$ is a pre-trained stochastic interpolant or diffusion model.
We begin by stating our result for Lagrangian distillation.
\begin{restatable}[Lagrangian error bound]{proposition}{lagrangianerr}
	\label{prop:lagrange_wasserstein}
	Let $X_{s, t}: \R^d\rightarrow \R^d$ denote the flow map for $b$, and let $\hat{X}_{s, t}: \R^d \rightarrow \R^d$ denote an approximate flow map.
	Given $x_0\sim\rho_0$, let $\hat{\rho}_1$ be the density of $\hat X_{0,1}(x_0)$ and let $\rho_1$ be the target density of $X_{0,1}(x_0)$.
	Then,
	\begin{equation}
		\label{eq:W2:1}
		W_2^2(\rho_1, \hat{\rho}_1) \leq e^{1+2\int_0^1|C_t|dt}\calL_{\lmd}(\hat{X}).
	\end{equation}
	where $C_t$ is the constant that appears in Assumption~\ref{as:one-sided}.
\end{restatable}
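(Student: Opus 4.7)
The plan is to construct an explicit coupling of $\rho_1$ and $\hat\rho_1$ by pushing a common sample $x_0 \sim \rho_0$ through both the exact flow $X_{0,\cdot}$ and the learned flow $\hat X_{0,\cdot}$, and then control the expected squared distance of the endpoints by a Gr\"onwall argument driven by the residual of the Lagrangian equation. Because the boundary conditions on $(\alpha,\beta,\gamma)$ force $I_0 = x_0$, the residual will be exactly the integrand of $\calL_{\lmd}$ restricted to $s=0$, which is the piece of the loss that controls transport from $\rho_0$ to $\rho_1$.

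The first step is to define the error process $e_t(x_0) := \hat X_{0,t}(x_0) - X_{0,t}(x_0)$ and the residual $r_t(x_0) := \partial_t \hat X_{0,t}(x_0) - b_t(\hat X_{0,t}(x_0))$. Using the Lagrangian characterization of $X$ from \Cref{prop:flow_map} (namely $\partial_t X_{0,t} = b_t(X_{0,t})$) together with $e_0=0$, I get
\begin{equation*}
    \partial_t e_t(x_0) = \bigl(b_t(\hat X_{0,t}(x_0)) - b_t(X_{0,t}(x_0))\bigr) + r_t(x_0).
\end{equation*}

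Next, I would take the inner product with $2 e_t$, bound the first term with the one-sided Lipschitz condition from \Cref{as:one-sided}, and use Young's inequality $2 e_t \cdot r_t \le |e_t|^2 + |r_t|^2$ on the second. This yields
\begin{equation*}
    \partial_t |e_t|^2 \le (2 C_t + 1)|e_t|^2 + |r_t|^2.
\end{equation*}
Gr\"onwall's lemma with $|e_0|^2 = 0$ then gives
\begin{equation*}
    |e_1(x_0)|^2 \le \int_0^1 \exp\!\Bigl(\int_u^1 (2 C_v + 1)\, dv\Bigr) |r_u(x_0)|^2\, du \le e^{1 + 2\int_0^1 |C_t|\, dt} \int_0^1 |r_u(x_0)|^2\, du,
\end{equation*}
which reproduces the prefactor in \eqref{eq:W2:1}. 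Taking expectation over $x_0 \sim \rho_0$, and using that $I_0 = x_0$ since $\alpha_0=1$ and $\beta_0=\gamma_0=0$, turns $\E_{x_0}[|r_u(x_0)|^2]$ into the integrand of $\calL_{\lmd}$ evaluated at $s=0$. Finally, since $(X_{0,1}(x_0), \hat X_{0,1}(x_0))$ with $x_0 \sim \rho_0$ is a valid coupling of $\rho_1$ and $\hat\rho_1$, the definition of $W_2$ gives $W_2^2(\rho_1,\hat\rho_1) \le \E_{x_0}[|e_1(x_0)|^2]$, completing the bound.

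The main obstacle is really a bookkeeping one rather than an analytic one: the argument above naturally produces a bound by the $s=0$ slice of the Lagrangian integrand, whereas the statement writes $\calL_{\lmd}(\hat X)$ with a general weight $w_{s,t}$. This is resolved either by interpreting the right-hand side of \eqref{eq:W2:1} as the Lagrangian loss with $w_{s,t}$ supported on $\{s=0\}$, or by observing that the same Gr\"onwall argument applied from arbitrary starting time $s$ gives a pointwise-in-$(s,t)$ bound on $W_2^2(\rho_t, \hat\rho_t)$, of which \eqref{eq:W2:1} is the $(s,t)=(0,1)$ instance; integrating in $(s,t)$ then yields the bound in terms of the full weighted $\calL_{\lmd}(\hat X)$. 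Verifying the one-sided Lipschitz step is smooth because \Cref{as:one-sided} is stated for exactly the quantity $(b_t(x)-b_t(y))\cdot(x-y)$ that appears after taking the inner product with $e_t$.
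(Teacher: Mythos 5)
Your proof is correct and follows essentially the same route as the paper's: a Gr\"onwall argument on the squared discrepancy between the true and approximate flow maps launched from a common $x_0 = I_0$, using the one-sided Lipschitz condition together with Young's inequality, and concluding with the coupling bound on $W_2^2$. The bookkeeping issue you flag --- that the argument only bounds by the $s=0$ slice $\int_0^1\E\big[|\partial_t\hat X_{0,t}(x_0)-b_t(\hat X_{0,t}(x_0))|^2\big]\,dt$ rather than the fully weighted $\calL_{\lmd}(\hat X)$ --- is present, unremarked, in the paper's own proof as well, so your explicit acknowledgment and proposed resolutions are if anything more careful than the original.
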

The proof is given in~\cref{app:theory}.
We now state an analogous result for the Eulerian case.
\begin{restatable}[Eulerian error bound]{proposition}{eulerianerr}
	\label{prop:euler_wasserstein}
	Let $X_{s, t}: \R^d\rightarrow \R^d$ denote the flow map for $b$, and let $\hat{X}_{s, t}: \R^d \rightarrow \R^d$ denote an approximate flow map.
	Given $x_0\sim\rho_0$, let $\hat{\rho}_1$ be the density of $\hat X_{0,1}(x_0)$ and let $\rho_1$ be the target density of $X_{0,1}(x_0)$.
	Then,
	\begin{equation}
		\label{eq:W2:2}
		W_2^2(\rho_1, \hat{\rho}_1) \leq e^1 \calL_{\emd}(\hat{X}).
	\end{equation}
\end{restatable}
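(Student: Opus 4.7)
The strategy is to exploit the natural coupling induced by the ODE flow: for $x_0 \sim \rho_0$, the pair $\bigl(X_{0,1}(x_0), \hat{X}_{0,1}(x_0)\bigr)$ is a coupling of $\rho_1$ and $\hat{\rho}_1$, so
\[
W_2^2(\rho_1, \hat{\rho}_1) \leq \E\bigl[|X_{0,1}(x_0) - \hat{X}_{0,1}(x_0)|^2\bigr].
\]
The task is then to bound this expected squared error by the Eulerian residual driving $\calL_{\emd}$. To do so I would follow the exact characteristic $x_s$ of~\eqref{eqn:ode} starting from $x_0$ and introduce the curve $y(s) := \hat{X}_{s,1}(x_s)$. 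The boundary condition $\hat{X}_{1,1}(x_1) = x_1$, combined with $x_1 = X_{0,1}(x_0)$, gives $y(1) = X_{0,1}(x_0)$ and $y(0) = \hat{X}_{0,1}(x_0)$. Applying the chain rule with $\dot{x}_s = b_s(x_s)$ produces
\[
y'(s) = \partial_s \hat{X}_{s,1}(x_s) + b_s(x_s) \cdot \nabla \hat{X}_{s,1}(x_s),
\]
which is precisely the Eulerian residual from~\eqref{eqn:backward} evaluated along the characteristic. Identifying this object is the key insight: the Eulerian equation and the boundary condition together convert the full error $|y(0)-y(1)|$ into an integral of residuals.

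To extract the $e^1$ factor, I would set $\phi(s) := |y(s) - y(1)|^2$, so $\phi(1) = 0$, and apply Cauchy--Schwarz together with $2ab \leq a^2 + b^2$ to obtain $\phi'(s) \geq -\phi(s) - |y'(s)|^2$. Multiplying by $e^s$ gives $(e^s \phi(s))' \geq -e^s |y'(s)|^2$; integrating over $[0,1]$ using $\phi(1)=0$ then yields the deterministic bound $\phi(0) \leq \int_0^1 e^s |y'(s)|^2\, ds \leq e \int_0^1 |y'(s)|^2\, ds$. Taking expectations and invoking $\Law(x_s) = \rho_s = \Law(I_s)$ to replace $x_s$ by $I_s$ inside each $s$-integrand delivers
\[
W_2^2(\rho_1, \hat{\rho}_1) \leq e \int_0^1 \E\bigl[|\partial_s \hat{X}_{s,1}(I_s) + b_s(I_s) \cdot \nabla \hat{X}_{s,1}(I_s)|^2\bigr]\, ds,
\]
which is precisely the $t=1$ slice of $\calL_{\emd}(\hat{X})$.

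The main obstacle is really bookkeeping: the argument only uses the $t=1$ component of $\calL_{\emd}$, so matching it to the stated double-integral loss requires either specializing $w_{s,t}$ to concentrate at $t=1$ or running the same characteristic argument at arbitrary $t\in(0,1]$ and absorbing the $t$-integration into $\calL_{\emd}$ under the positive weight. Note also that, unlike the Lagrangian bound of~\cref{prop:lagrange_wasserstein}, no one-sided Lipschitz constant $C_t$ appears in the prefactor: evaluating the Eulerian residual along the \emph{exact} characteristics of $b$ bypasses the need to propagate errors under the learned flow, so only the $e^1$ Grönwall multiplier survives.
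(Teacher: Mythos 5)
Your proposal is correct and follows essentially the same route as the paper: both arguments track the squared gap between $X_{s,1}$ and $\hat X_{s,1}$ along a process with law $\rho_s$, recognize its $s$-derivative as the Eulerian residual, apply Young's inequality and a Gr\"onwall argument anchored at $E_{t,t}=0$ to extract the factor $e^1$, and finish with the coupling bound on $W_2^2$. The only (cosmetic) difference is that you work pathwise along exact characteristics $x_s$, where $\dot x_s = b_s(x_s)$ holds deterministically, whereas the paper differentiates $\E[|X_{s,1}(I_s)-\hat X_{s,1}(I_s)|^2]$ directly and invokes the tower property to replace $\dot I_s$ by $b_s(I_s)$; your explicit remark that the bound really uses only the $t=1$ slice of $\calL_{\emd}$ is a fair observation that applies equally to the paper's own proof.
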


The proof is also given in~\cref{app:theory}.
The result in~\cref{prop:euler_wasserstein} appears stronger than the result in~\cref{prop:lagrange_wasserstein}, because it is independent of any Lipschitz constant.
Notice, however, that unlike~\eqref{eq:W2:1} the bound~\eqref{eq:W2:2} involves the spatial gradient of the map $X_{s,t}$, which may be more difficult to control.
In our numerical experiments, we found the best performance when using the Lagrangian distillation loss, rather than the Eulerian distillation loss.
We hypothesize and provide numerical evidence that this originates from avoiding the spatial gradient present in the Eulerian distillation loss; in several cases of interest, the learned map can be singular or nearly singular, so that the spatial gradient is not well defined everywhere.
This leads to training difficulties that manifest themselves as fuzzy boundaries on the checkerboard dataset and blurry images on image datasets.

\subsection{Direct training with flow map matching (FMM)}
\label{ssec:direct}
We now give a loss function for \emph{direct} training of the flow map that does not require a pre-trained $b$.

\begin{restatable}[Flow map matching]{proposition}{fmm}
	\label{prop:fmm}
	The flow map is the global minimizer over $\hat{X}$ of the loss
	\begin{align}
		\label{eq:obj:match}
		\calL_{\mathsf{FMM}}(\hat X) = \int_{[0,1]^2} w_{s,t} \left(\E \big [| \partial_t \hat X_{s,t} (\hat X_{t,s}(I_t)) - \dot{I}_t|^2\big] + \E \big [| \hat X_{s,t} (\hat X_{t,s}(I_t)) - I_t|^2 \big]\right) ds dt,
	\end{align}
	subject to the boundary condition $\hat{X}_{s, s}(x) = x$ for all $x \in \R^d$ and for all $s \in \R$.
	In~\eqref{eq:obj:match}, $w_{s,t} > 0$ and $\E$ is taken over the coupling $(x_0, x_1) \sim \rho(x_0, x_1)$ and $z \sim \mathsf N(0, \Id)$.
\end{restatable}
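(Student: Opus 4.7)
The plan is to show that the two non-negative integrands in $\calL_{\mathsf{FMM}}$ are simultaneously minimized by the true flow map $X_{s,t}$; since $w_{s,t}>0$ and each integrand is $\geq 0$, this will suffice to conclude that $X$ is a global minimizer. For the invertibility term $\E[|\hat X_{s,t}(\hat X_{t,s}(I_t)) - I_t|^2]$, the semigroup identity \eqref{eq:semigroup} from \cref{prop:flow_map}, applied with the roles of $s$ and $t$ interchanged and $\tau=t$, gives $X_{s,t}(X_{t,s}(x)) = X_{t,t}(x) = x$. So the true flow map makes this term vanish pointwise and hence in expectation.

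For the velocity term $\E[|\partial_t \hat X_{s,t}(\hat X_{t,s}(I_t)) - \dot I_t|^2]$, I would first observe that at fixed $(s,t)$ the quantity $g(x) := \partial_t \hat X_{s,t}(\hat X_{t,s}(x))$ is a deterministic function of $x$ only. Using the defining characterization $b_t(x) = \E[\dot I_t \mid I_t = x]$ from \cref{thm:stoch_interp}, the $L^2$ orthogonality of the conditional expectation yields the decomposition
\begin{equation*}
\E\big[|g(I_t) - \dot I_t|^2\big] = \E\big[|g(I_t) - b_t(I_t)|^2\big] + \E\big[|b_t(I_t) - \dot I_t|^2\big].
\end{equation*}
The second summand does not depend on $\hat X$, so the velocity integrand is minimized over $\hat X$ precisely when $g(I_t) = b_t(I_t)$ almost surely. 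To verify that $X$ saturates this bound, I would use the Lagrangian equation \eqref{eqn:flow_map_dyn:L}, which gives $\partial_t X_{s,t}(y) = b_t(X_{s,t}(y))$ at any fixed $y$; substituting $y = X_{t,s}(I_t)$ and invoking the semigroup identity a second time yields $\partial_t X_{s,t}(X_{t,s}(I_t)) = b_t(X_{s,t}(X_{t,s}(I_t))) = b_t(I_t)$, as required. Combined with the first paragraph, this shows $X$ minimizes each non-negative integrand for every $(s,t) \in [0,1]^2$.

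The main subtlety, rather than a genuine obstacle, is the interpretation of $\partial_t$. Consistent with the notation used for \eqref{eqn:flow_map_dyn:L} and \eqref{eqn:distillation_loss}, the derivative is taken in the explicit $t$-index of $\hat X_{s,t}$ only, with the spatial argument held fixed; under this convention the two terms decouple cleanly as above. With a total-derivative reading, chain-rule contributions from the $t$-dependence of $\hat X_{t,s}(I_t)$ and of $I_t$ would mix the two terms and require a more delicate argument, which is presumably why the paper introduces the two-component structure in \eqref{eq:obj:match}: the second term acts as a pointwise anchor that makes the partial-derivative convention harmless. Since the statement asserts only global minimality and not uniqueness, no further classification of minimizers is needed.
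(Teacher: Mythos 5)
Your argument is essentially the paper's: the $L^2$-orthogonality decomposition you invoke for the velocity term is exactly the tower-property expansion the paper performs (replacing $\dot I_t$ by $b_t(I_t)=\E[\dot I_t\mid I_t]$ in the cross term and splitting off the $\hat X$-independent remainder $\E[|\dot I_t|^2-|b_t(I_t)|^2]$), and your use of the semigroup identity and the Lagrangian equation \eqref{eqn:flow_map_dyn:L} to check that the true map annihilates both residuals is the same verification. Your reading of $\partial_t$ as a partial derivative in the explicit time index only is also the intended one.

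The one place you stop short of the paper is the converse direction. You show the flow map attains the minimum and explicitly decline to classify minimizers, but the statement says the flow map is \emph{the} global minimizer, and the paper proves this: after the decomposition, the $\hat X$-dependent part is minimized if and only if $\partial_t \hat X_{s,t}(\hat X_{t,s}(x)) = b_t(x)$ and $\hat X_{s,t}(\hat X_{t,s}(x)) = x$ hold pointwise; evaluating the first identity at $y=\hat X_{t,s}(x)$ and using the second then yields $\partial_t\hat X_{s,t}(y)=b_t(\hat X_{s,t}(y))$ with $\hat X_{s,s}(y)=y$, and uniqueness of solutions to this Lagrangian equation (\cref{prop:flow_map}, under \cref{as:one-sided}) forces $\hat X = X$. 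This converse is the substantive content for the algorithm --- it is what guarantees that minimizing $\calL_{\mathsf{FMM}}$ actually recovers the flow map rather than some other zero of the loss. You already have all the pieces (your ``precisely when'' characterization of each term's minimizer), so closing the gap is a two-line addition, but as written the proof only establishes that $X$ is \emph{a} global minimizer.
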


\begin{algorithm}[t]
	\caption{Flow map matching (FMM)}
	\label{alg:fmm}
	\SetAlgoLined
	\SetKwInput{Input}{input}
	\Input{Interpolant coefficients $\alpha_t, \beta_t, \gamma_t$; weight function $w_{s, t}$; batch size $M$.}
	\Repeat{converged}{
	Draw batch $(s_i, t_i, x_0^i, x_1^i, z_i)_{i=1}^M$ from $w_{s,t} \times \rho(x_0,x_1) \times \mathsf N(z; 0, \Id)$.\\
	Compute $I_{t_i} = \alpha_{t_i} x_0^i + \beta_{t_i} x_1^i + \gamma_{t_i} z_i$ and $\dot I_{t_i} = \dot\alpha_{t_i} x_0^i + \dot\beta_{t_i} x_1^i + \dot\gamma_{t_i} z_i$.\\
	Compute $\hat{\calL}_{\text{FMM}} = \frac{1}{M}\sum_{i=1}^M \left(|\partial_t \hat{X}_{s_i, t_i}(\hat{X}_{t_i, s_i}(I_{t_i})) - \dot{I}_{t_i}|^2 + |\hat{X}_{s_i, t_i}(\hat{X}_{t_i, s_i}(I_{t_i})) - I_{t_i}|^2\right)$.\\
	Take gradient step on $\hat{\calL}_{\text{FMM}}$ to update $\hat X$.
	}
	\SetKwInput{Output}{output}
	\Output{Flow map $\hat{X}$.}
\end{algorithm}

\begin{wrapfigure}[14]{r}{0.32\textwidth}
	\centering
	\includegraphics[width=0.30\textwidth]{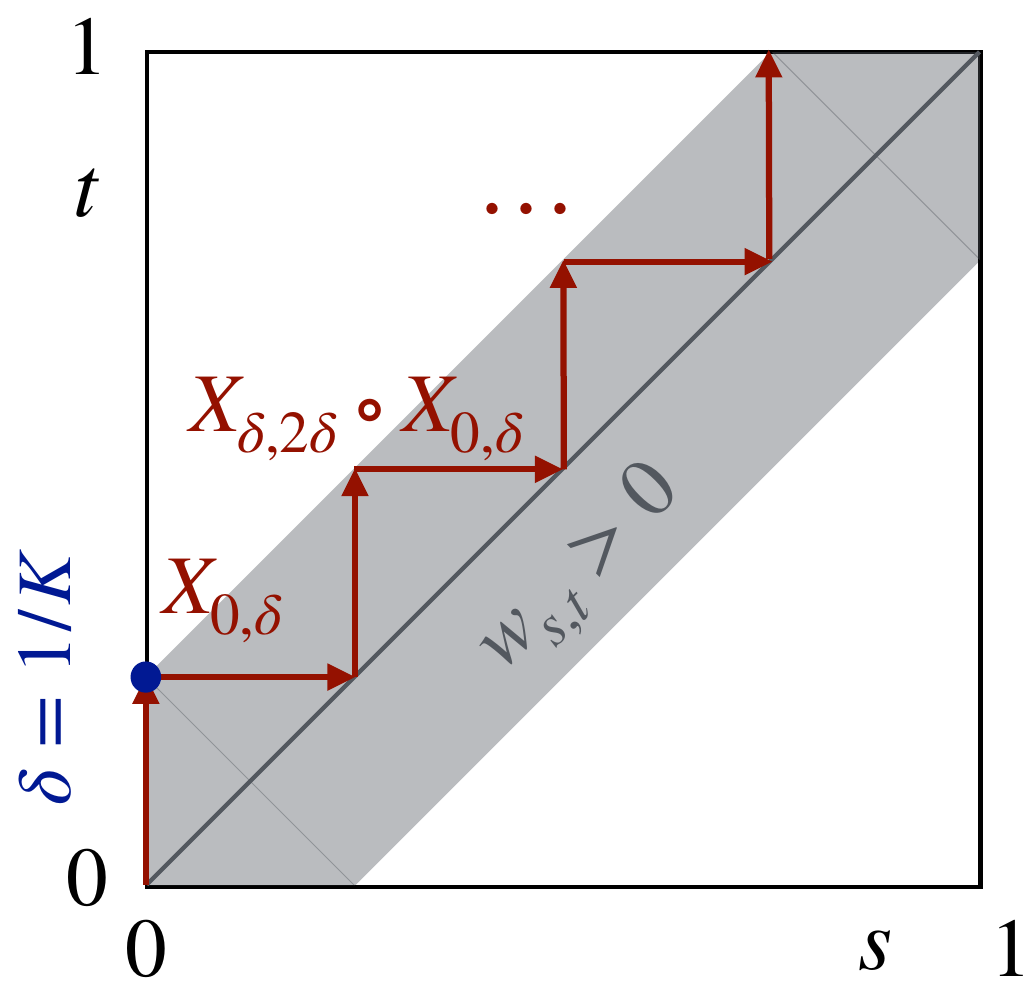}
	\caption{Schematic illustrating the weight $w_{s, t}$ in the FMM loss, which can be tuned to arrive at different learning schemes.}
	\label{fig:cartoon}
\end{wrapfigure}
In the loss~\eqref{eq:obj:match}, we are free to adjust the weight factor $w_{s,t}$, as illustrated in \cref{fig:cartoon}.
However, since we need to learn both the map $X_{s,t}$ and its inverse $X_{t,s}$, it is necessary to enforce the symmetry property $w_{t,s} = w_{s,t}$.
If we learn the map for all $(s,t) \in [0,1]^2$ using, for example, $w_{s,t} = 1$, then we can generate samples from~$\rho_1$ in one step via $X_{0,1}(x_0)$ with $x_0\sim\rho_0$.
If we learn the map in a strip, as shown in \cref{fig:cartoon}, the learning becomes simpler but we need to use multiple steps to generate samples from $\rho_1$; the number of steps then depends of the width of the strip.
We note that the second term enforcing invertibility in \cref{eq:obj:match} comes at no additional cost on the forward pass, because $\hat{X}_{s, t}(\hat{X}_{t, s}(I_t))$ can be computed at the same time as $\partial_t \hat{X}_{s, t}(\hat{X}_{t, s}(I_t))$ with standard Jacobian-vector product functionality in modern deep learning packages.
A summary of the flow map matching procedure is given in~\cref{alg:fmm}.

\subsection{Eulerian estimation or Eulerian distillation?}
\label{sec:eul:est}
In light of~\Cref{prop:fmm}, the reader may wonder whether we could also perform direct estimation in the Eulerian setup, using for example the objective
\begin{equation}
	\label{eqn:L_backward:2}
	\int_{[0,1]^2} w_{s,t} \E\Big[\big|\partial_s \hat{X}_{s, t}(I_s) + \dot I_s  \cdot \nabla \hat{X}_{s, t}(I_s)\big|^2\Big] dsdt.
\end{equation}
The above loss may be obtained from~\eqref{eqn:L_backward} by replacing the appearance of $b_s(I_s)$ with $\dot I_s$.
Unfortunately, \eqref{eqn:L_backward:2} is not equivalent to~\eqref{eqn:L_backward}. To see why, we can expand the expectation in \eqref{eqn:L_backward:2}:
\begin{equation}
	\label{eq:expand}
	\begin{aligned}
		 & \E\Big[\big|\partial_s \hat{X}_{s, t}(I_s) + \dot I_s  \cdot \nabla \hat{X}_{s, t}(I_s)\big|^2\Big]                                                                                                     \\
		 & = \E\Big[\big|\partial_s \hat{X}_{s, t}(I_s)\big|^2 + 2 (\dot I_s  \cdot \nabla \hat{X}_{s, t}(I_s))\cdot \partial_s \hat{X}_{s, t}(I_s) + \big|\dot I_s  \cdot \nabla \hat{X}_{s, t}(I_s)\big|^2\Big].
	\end{aligned}
\end{equation}
The cross term is linear in $\dot I_s$, so that we can use the tower property of the conditional expectation to see
\begin{equation}
	\label{eq:cterm}
	\begin{aligned}
		\E\big[(\dot I_s  \cdot \nabla \hat{X}_{s, t}(I_s))\cdot \partial_s \hat{X}_{s, t}(I_s)\big] & = \E\big[(\E[\dot I_s|I_s]  \cdot \nabla \hat{X}_{s, t}(I_s))\cdot \partial_s \hat{X}_{s, t}(I_s)\big], \\
		                                                                                             & = \E\big[(b_s(I_s)  \cdot \nabla \hat{X}_{s, t}(I_s))\cdot \partial_s \hat{X}_{s, t}(I_s)\big] .
	\end{aligned}
\end{equation}
However, the tower property cannot be applied to the last term in~\eqref{eq:expand} since it is quadratic in $\dot I_s$, i.e.
\begin{equation}
	\label{eq:lterm}
	\begin{aligned}
		\E\Big[ \big|\dot I_s  \cdot \nabla \hat{X}_{s, t}(I_s)\big|^2\Big]\not= \E\Big[ \big|b_s(I_s)  \cdot \nabla \hat{X}_{s, t}(I_s)\big|^2\Big].
	\end{aligned}
\end{equation}
Since this term depends on $\hat X$, it cannot be neglected in the minimization, and the minimizer of~\eqref{eqn:L_backward:2} is not the same as that of~\eqref{eqn:L_backward}.
Recognizing this difficulty, consistency models~\citep{song_consistency_2023, song_improved_2023, kim_consistency_2024} place a $\stopgrad{\cdot}$ on the term  $\dot I_s  \cdot \nabla \hat{X}_{s, t}(I_s)$ when computing the gradient of the loss~\eqref{eqn:L_backward:2}.
The resulting iterative scheme used to update $\hat X$ then has the correct fixed point at $\hat X = X$, as summarized in the following proposition.
\begin{restatable}[Eulerian estimation]{proposition}{emm}
	\label{prop:emm}
	The flow map $X_{s, t}$ is a critical point of the loss
	\begin{align}
		\label{eq:obj:eul:match}
		\calL_{\ee}(\hat{X}) = \int_{[0,1]^2} w_{s,t} \E \big [| \partial_s \hat X_{s,t} (I_s) - \stopgrad{\dot{I}_s \cdot \nabla \hat X_{s,t} (I_s)}|^2\big]  ds dt,
	\end{align}
	subject to the boundary condition $\hat{X}_{s, s}(x) = x$ for all $x \in \R^d$ and for all $s \in \R$.
	In~\eqref{eq:obj:eul:match}, $w_{s,t} > 0$ and $\E$ is taken over the coupling $(x_0, x_1) \sim \rho(x_0, x_1)$, $z \sim \mathsf N(0, \Id)$.
	The operator $\stopgrad{\cdot}$ indicates that the argument is ignored when computing the gradient.
\end{restatable}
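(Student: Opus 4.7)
\textbf{Proof plan for Proposition~\ref{prop:emm}.}

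The plan is to verify directly that the flow map $X_{s,t}$ satisfies the first-order stationarity condition induced by the stopgrad semantics, rather than minimize the loss as a genuine quadratic functional. Concretely, I will compute the variation of $\calL_{\ee}$ in an admissible direction $\eta_{s,t}$ (one respecting the boundary condition $\eta_{s,s}(x)=0$), treating the argument of $\stopgrad{\cdot}$ as a constant with respect to $\hat X$, and then evaluate at $\hat X = X$ to see that it vanishes.

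First, I would differentiate under the integral to get
\[
\delta \calL_{\ee}(\hat X)[\eta] \;=\; 2\int_{[0,1]^2} w_{s,t}\,\E\!\left[\bigl(\partial_s \hat X_{s,t}(I_s) - \dot I_s \cdot \nabla \hat X_{s,t}(I_s)\bigr)\cdot \partial_s \eta_{s,t}(I_s)\right] ds\,dt,
\]
since the $\stopgrad$ blocks any contribution from the $\nabla \hat X$ factor. Next, I would isolate the cross term containing $\dot I_s$ and apply the tower property as in~\eqref{eq:cterm}: because $\eta_{s,t}$ and $\hat X_{s,t}$ are deterministic functions of $I_s$, and the term is linear in $\dot I_s$, we may replace $\dot I_s$ by its conditional expectation $\E[\dot I_s \mid I_s] = b_s(I_s)$ (this is exactly the identity~\eqref{eqn:b:def} that was not available for the naive loss~\eqref{eqn:L_backward:2}, and precisely the reason the $\stopgrad$ is necessary). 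This yields the equivalent expression
\[
\delta \calL_{\ee}(\hat X)[\eta] \;=\; 2\int_{[0,1]^2} w_{s,t}\,\E\!\left[\bigl(\partial_s \hat X_{s,t}(I_s) - b_s(I_s)\cdot \nabla \hat X_{s,t}(I_s)\bigr)\cdot \partial_s \eta_{s,t}(I_s)\right] ds\,dt.
\]

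Now I would substitute $\hat X = X$. By~\cref{prop:prop_backwards}, the flow map satisfies the pointwise Eulerian equation $\partial_s X_{s,t}(x) + b_s(x)\cdot \nabla X_{s,t}(x) = 0$ for all $x\in\R^d$; in particular the bracketed expression above vanishes identically when evaluated at $x = I_s$ (up to the sign convention in~\eqref{eq:obj:eul:match}), so $\delta \calL_{\ee}(X)[\eta]=0$ for every admissible $\eta$. This is precisely the statement that $X_{s,t}$ is a critical point of $\calL_{\ee}$ under the stopgrad semantics.

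The main obstacle I anticipate is a conceptual one rather than a computational one: one must be careful that ``critical point'' here means stationarity of the effective gradient dictated by $\stopgrad{\cdot}$ (which corresponds to the update rule used in practice), and \emph{not} a critical point of $\calL_{\ee}$ viewed as a smooth functional of $\hat X$ in the ordinary sense. The asymmetry between the two is exactly what is captured in~\eqref{eq:expand}--\eqref{eq:lterm}: the quadratic piece $|\dot I_s\cdot \nabla \hat X|^2$ does not commute with conditional expectation, so without the $\stopgrad$ the minimizer would be displaced from $X$. The tower-property step, and keeping track of why it applies to the cross term in the variation but not to the quadratic term in the raw loss, is the key place where care is needed.
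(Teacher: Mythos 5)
Your proposal is correct and follows essentially the same route as the paper: the paper likewise computes the effective functional gradient under the $\stopgrad{\cdot}$ semantics (so only the $\partial_s \hat X$ factor is varied), uses the tower property to replace $\dot I_s$ by $b_s(I_s)$ in the resulting first variation exactly as in~\eqref{eq:cterm}, and concludes via the Eulerian equation~\eqref{eqn:backward}. Your explicit remark about the sign convention in~\eqref{eq:obj:eul:match} is well taken -- the paper's own proof silently writes the gradient with a $+$ sign consistent with~\eqref{eqn:backward} rather than the $-$ sign appearing in the stated loss -- and your caveat that ``critical point'' refers to stationarity of the stopgrad update rather than of the raw functional matches the discussion surrounding~\eqref{eq:expand}--\eqref{eq:lterm}.
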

We note that it is challenging to guarantee that the critical point described in~\cref{prop:emm} is stable and attractive for an iterative gradient-based scheme, as the objective function is not guaranteed to decrease due to the appearance of the $\stopgrad{\cdot}$ operator.
Nonetheless, some works have found success with related approaches on large-scale image generation problems \citep{song_consistency_2023}.

\begin{algorithm}[t]
	\caption{Progressive flow map matching (PFMM)}
	\label{alg:pfmm}
	\SetAlgoLined
	\SetKwInput{Input}{input}
	\Input{Interpolant coefficients $\alpha_t, \beta_t, \gamma_t$; weight $w_{s, t}$; $K$-step flow map $\hat{X}$; batch size $M$.}
	\Repeat{converged}{
		Draw batch $(s_i, t_i, I_{s_i})_{i=1}^M$ and compute $t_k^i = s_i + (k-1)(t_i - s_i)$ for $k = 1, \hdots, K$.\\
		Compute $\hat{\calL}_{\pfmm} = \frac{1}{M}\sum_{i=1}^M \left(|\check{X}_{s_i, t_i}(I_{s_i}) - \big(\hat{X}_{t_{K-1}^i, t_K^i}\circ \cdots \circ \hat{X}_{t_1^i, t_2^i}\big)(I_{s_i})|^2\right)$.\\
		Take gradient step on $\hat{\calL}_{\pfmm}$ to update $\check X$.
	}
	\SetKwInput{Output}{output}
	\Output{One-step flow map $\check {X}$.}
\end{algorithm}

\subsection{Progressive distillation}
\label{sec:progress}

Empirically, we found directly learning a one-step map to be challenging in practice.
Convergence was significantly improved  by taking $w_{s,t} = w_{t,s} = \mathbb{I}\left(|t-s| \leq 1/K\right)$ for some $K \in \mathbb{N}$ where $\mathbb{I}$ denotes an indicator function.
Given such a $K$-step model, it can be converted into a one-step model using a map distillation loss that is similar to progressive distillation~\citep{salimans_progressive_2022} and neural operator approaches~\citep{zheng_fast_2023}.

\begin{restatable}[Progressive flow map matching]{lemma}{fmmd}
	\label{prop:fmmd}
	Let $\hat X$ be a two-time flow map. Given $K\in \N$, let $t_k = s + \frac{k-1}{K-1}(t-s)$ for $k=1, \ldots, K$. Then the unique minimizer over $\check X$ of the objective
	\begin{align}
		\label{eq:obj:match:D}
		\calL_{\pfmm}(\check X) = \int_{[0,1]^2} w_{s,t}  \E \Big[\big| \check X_{s,t} (I_s) - \big(\hat{X}_{t_{K-1}, t_K}\circ \cdots \circ \hat{X}_{t_1, t_2}\big)(I_s)\big|^2\Big] ds dt,
	\end{align}
	produces the same output in one step as the $K$-step iterated map $\hat X$. Here $w_{s,t}> 0$, and $\E$ is taken over the coupling $(x_0, x_1) \sim \rho(x_0, x_1)$ and $z \sim \mathsf N(0, \Id)$.
\end{restatable}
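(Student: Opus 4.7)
The objective $\calL_{\pfmm}$ is a non-negative integral weighted by $w_{s,t}>0$, so the plan is to minimize the integrand pointwise in $(s,t)$ and then read off the resulting $\check X$. At every $(s,t)\in[0,1]^2$, I set
\begin{align*}
F_{s,t} := \hat X_{t_{K-1}, t_K} \circ \hat X_{t_{K-2}, t_{K-1}} \circ \cdots \circ \hat X_{t_1, t_2}.
\end{align*}
Because $\hat X$ is a given deterministic two-time flow map and the grid $(t_k)_{k=1}^{K}$ depends only on $(s,t)$, the composition $F_{s,t}$ is a fixed measurable map $\R^d \to \R^d$ with no residual randomness. The integrand at $(s,t)$ then collapses to the scalar $\E\big[|\check X_{s,t}(I_s) - F_{s,t}(I_s)|^2\big]$, which is a standard squared-error regression against a $\sigma(I_s)$-measurable target.

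The next step is to invoke the classical $L^2$ identification: for a measurable $g:\R^d\to\R^d$, the objective $\E[|g(Z)-Y|^2]$ with $Y$ a deterministic function of $Z$ is minimized uniquely (up to $\Law(Z)$-null sets) by $g=Y$, at minimum value $0$. Applying this with $Z = I_s$ and $Y = F_{s,t}(I_s)$ forces $\check X_{s,t}(x) = F_{s,t}(x)$ at $\Law(I_s)$-almost every $x$. This pointwise choice simultaneously minimizes the integrand at every $(s,t)$, so it also minimizes the full integral $\calL_{\pfmm}$, and the global minimum value is $0$. Reading off the endpoints $t_1 = s$ and $t_K = t$, the map $F_{s,t}$ is precisely the iterated $\hat X$ marching through the uniform subdivision from time $s$ to time $t$, so the minimizer $\check X_{s,t}$ reproduces in a single evaluation what the multi-step iteration of $\hat X$ produces.

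I anticipate no substantive analytical obstacle, as the proof rests on pointwise-in-$(s,t)$ minimization combined with the textbook $L^2$ projection identity. The only caveat worth flagging is that uniqueness of $\check X_{s,t}$ is guaranteed only on the support of $\Law(I_s)$ rather than on all of $\R^d$; this is the standard convention for $L^2$ identifiability and poses no difficulty here, since $I_s$ typically has positive density on $\R^d$ (e.g.~whenever $\gamma_s > 0$ or the endpoints have full support), and the minimizer is in any case only queried along sampled trajectories during inference.
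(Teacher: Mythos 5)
Your proposal is correct and follows essentially the same route as the paper, which simply observes that \eqref{eq:obj:match:D} is a least-squares loss whose unique minimizer is $\check X_{s,t}(x) = \big(\hat{X}_{t_{K-1},t_K}\circ\cdots\circ\hat{X}_{t_1,t_2}\big)(x)$; you merely spell out the pointwise-in-$(s,t)$ reduction and the $L^2$ identification that the paper leaves implicit. Your added caveat that uniqueness holds only $\Law(I_s)$-almost everywhere is a fair refinement of the paper's blanket "for all $x\in\R^d$" claim.
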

We note that $\hat X$ is fixed in~\eqref{eq:obj:match:D} and serves as the teacher, so we only need to compute the gradient with respect to the parameters of $\check X$.
In practice, we may train $\hat{X}$ using~\eqref{eq:obj:match} over a class of neural networks and then freeze its parameters.
We may then use~\eqref{eq:obj:match:D} to distill $\hat{X}$ into a more efficient model $\check{X}$, which can be initialized from the parameters of $\hat{X}$ for an efficient warm start.
Alternatively, we can add \eqref{eq:obj:match:D} to \eqref{eq:obj:match} and train a single network $\hat X$ by setting $\check X = \hat X$ in \eqref{eq:obj:match:D} and placing a $\stopgrad{\cdot}$ on the multistep term.

If $K$ evaluations of $\hat{X}$ are expensive, we may iteratively minimize~\eqref{eq:obj:match:D} with some number $M < K$ evaluations of $\hat{X}$ and then replace $\hat{X}$ by $\check{X}$, similar to progressive distillation~\citep{salimans_progressive_2022}.
For example, we may take $M = 2$ and then minimize~\eqref{eq:obj:match:D} $\ceil{\log_2 K}$ times to obtain a one-step map.
Alternatively, we can first generate a dataset of $(s, t, I_s, (\hat{X}_{t_{K-1}, t_K}\circ \cdots \circ \hat{X}_{t_1, t_2})(I_s))$ in a parallel offline phase, which converts~\eqref{eq:obj:match:D} into a simple least-squares problem.
Finally, if we are only interested in using the map forward in time, we can set $w_{s,t}=1$ if $s\le t$ and $w_{s,t}=0$ otherwise.
The resulting procedure is summarized in~\cref{alg:pfmm}.

\section{Numerical Realizations}
\label{sec:exp}
In this section, we study the efficacy of the four methods introduced in~\cref{sec:theory}:
the Lagrangian map distillation discussed in~\cref{prop:distill}, the Eulerian map distillation discussed in~\cref{prop:backward_loss}, the direct training approach of~\cref{prop:fmm}, and the progressive flow map matching approach of~\cref{prop:fmmd}.
We consider their performance on a two-dimensional checkerboard dataset, as well as in the high-dimensional setting of image generation, to highlight differences in their training efficiency and performance.

To ensure that the boundary conditions on the flow map $\hat X_{s,t}$ defined in~\eqref{eqn:flow_map_dyn:L} are enforced, in all experiments, we parameterize the map using the ansatz
\begin{equation}
	\label{eq:map:anstaz}
	X_{s,t}(x) = x + (t-s) v^\theta_{s,t}(x),
\end{equation}
where $v^\theta_{s,t}(x) : [0,T]^2 \times \mathbb R^d \rightarrow \mathbb R^d$ is a neural network with parameters~$\theta$.

\subsection{2D Illustration}
\begin{figure}[t]
	\centering
	\includegraphics[width=1.0\linewidth]{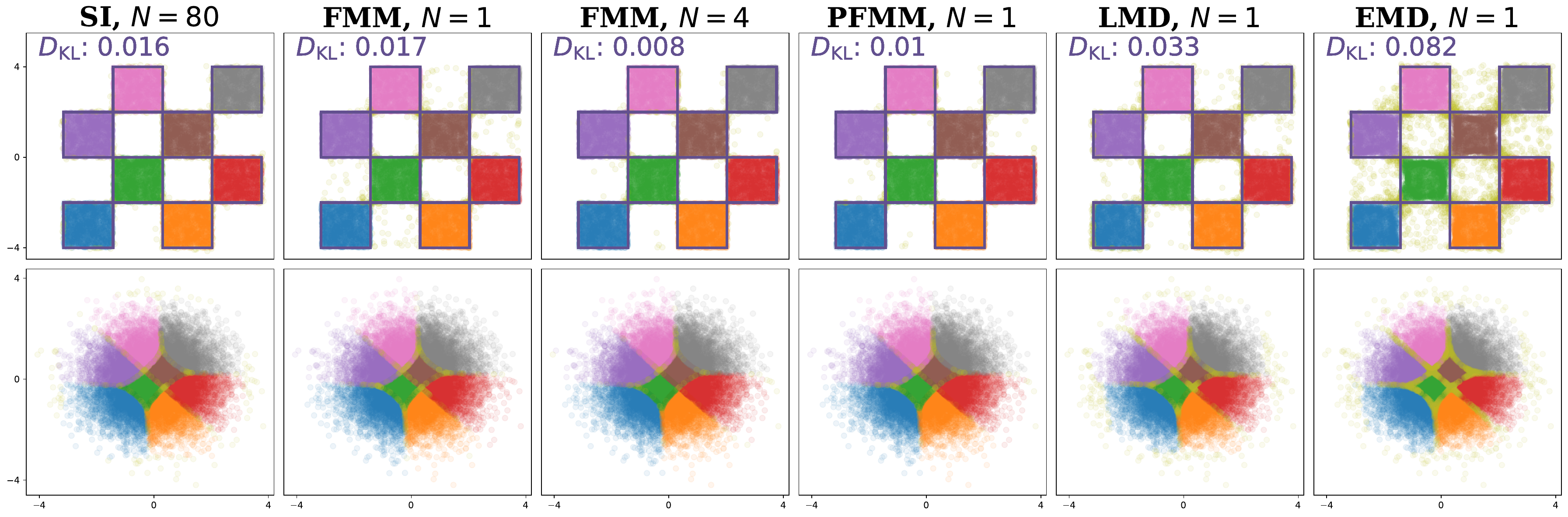}
	\caption{\textbf{Two-dimensional results.}
	Comparison of the various map-matching procedures on the 2D checkerboard dataset, with the results from the probability flow ODE of a stochastic interpolant integrated using $N=80$ discretization steps as reference (top, first panel from left).
	The one-step map obtained by FMM when learning on $(s,t)=[0,1]^2$ (top, second panel) performs as well as SI. Moreover, the accuracy improves if we allow four steps instead of one (top, third panel).
	This four-step map can be accurately distilled into a one-step map via PFMM (top, fourth panel).
	The one-step map obtained by distilling the pre-trained $b$ via LMD (top, fifth panel) performs reasonably well too, and is better than the one-step map obtained by distilling the same $b$ via EMD (top, sixth panel).
	A KL-divergence between each model distribution and the target is provided to quantify performance, indicating that FMM, its progressive distillation, and LMD are closest to the probability flow ODE baseline. The bottom row indicates, by color, how points from the Gaussian base are assigned by each of the respective maps. The yellow dots are points that mistakenly land outside the checkerboard. These results indicate that the primary source of error in each case is handling the discontinuity of the optimal map at the edges of the checker. See Appendix \ref{app:exp:checker} for more details.}
	\label{fig:checker}
\end{figure}
As a simple illustration of our method, we consider learning the flow map connecting a two-dimensional Gaussian distribution to the checkerboard distribution presented in~\cref{fig:checker}. Note that this example is challenging because the target density is supported on a compact set, and because the target is discontinuous at the edge of this set.
This mapping can be achieved, as discussed in~\cref{sec:theory}, in various ways: (a) implicitly, by solving~\eqref{eqn:ode} with a learned velocity field using stochastic interpolants (or a diffusion model), (b) directly, using the flow map matching objective in~\eqref{eq:obj:match}, (c) progressively matching the flow map using \eqref{eq:obj:match:D}, or (d/e) distilling the map using the Eulerian \eqref{eqn:L_backward} or Lagrangian \eqref{eqn:distillation_loss} losses.
In each case, we use a fully connected neural network with $512$ neurons per hidden layer and $6$ layers to parameterize either a velocity field $\hat b_t(x)$ or a flow map $\hat X_{s,t}(x)$.
We optimize each loss using the Adam~\citep{kingma2017adam} optimizer for $5\times 10^4$ training iterations.
The results are presented in~\cref{fig:checker}, where we observe that using the one-step $\hat X_{0,1}(x)$ directly learned by minimizing~\eqref{eq:obj:match} over the entire interval $(s, t) \in [0, 1]^2$ performs worse than learning with $|t-s| < 0.25$ and sampling with 4 steps.
With this in mind, we use the 4-step map as a teacher to minimize the PFMM loss, which produces a high-performing one-step map.
We also note that the EMD loss performs worse than the LMD loss when distilling the map from a learned velocity field.

\subsection{Image Generation}

\begin{figure}[b!]
	\centering
	\begin{overpic}[width=0.48\textwidth]{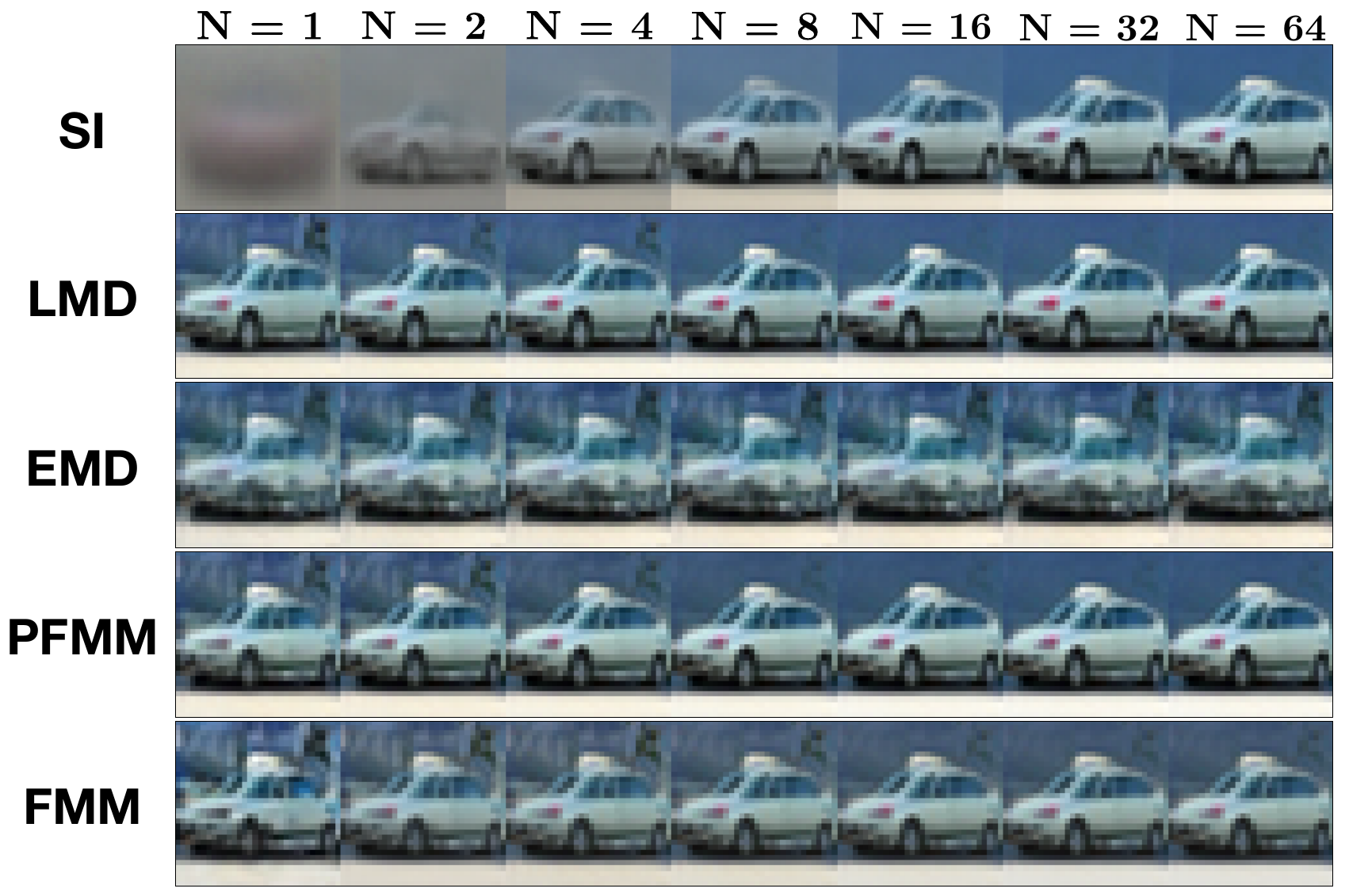}
		\centering
		\put(0,61){\textbf{A}}
	\end{overpic}
	\begin{overpic}[width=0.51\textwidth]{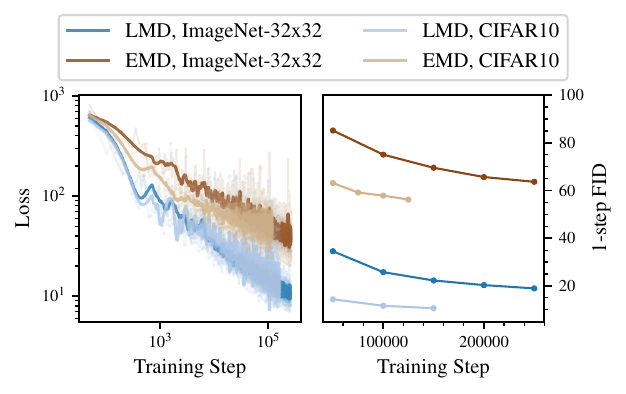}
		\centering
		\put(2,58){\textbf{B}}
	\end{overpic}
	\caption{%
		(A) Qualitative comparison between the standard stochastic interpolant approach (SI), Lagrangian map distillation (LMD), Eulerian map distillation (EMD), and progressive flow map matching (PFMM).
		SI produces good images for a sufficiently large number of steps, but performs poorly for few steps.
		LMD performs well in the very-few step regime, and outperforms EMD significantly.
		PFMM performs well at any number of steps, though performs slightly worse than LMD in the very-few step regime.
		(B) Quantitative comparison between EMD and LMD on both CIFAR-10 and ImageNet $32\times32$.
		Despite both having the same minimizer, LMD trains faster, and attains a lower loss value and a lower FID for a fixed number of training steps.
	}
	\label{fig:performance_comp}
\end{figure}

Motivated by the above results, we consider a series of image generation experiments on the CIFAR-10 and ImageNet-$32\times32$ datasets.
For comparison, we benchmark the method against alternative techniques that seek to lower the number of steps needed to produce samples with stochastic interpolant models, e.g. by straightening the ODE trajectories using minibatch OT \citep{pooladian2023multisample, tong_improving_2023}.
We train all of our models from scratch, so as to control the design space of the comparison.
For clarity, we label when benchmark numbers are quoted from the literature.
\begin{table}[t!]
	\centering
	\begin{tabular}{lcccccccc}
		\toprule
		\multirow{2}{*}{Method} & \multicolumn{2}{c}{N=2} & \multicolumn{2}{c}{N=4} & \multirow{2}{*}{Baseline}                \\
		\cmidrule(lr){2-3} \cmidrule(lr){4-5}
		                        & FID                     & T-FID                   & FID                       & T-FID &      \\
		\midrule
		SI                      & 112.42                  & -                       & 34.84                     & -     & 5.53 \\
		EMD                     & 48.32                   & 34.19                   & 44.35                     & 30.74 & 5.53 \\
		LMD                     & 7.13                    & 1.27                    & 6.04                      & 1.05  & 5.53 \\
		PFMM                    & 18.35                   & 7.02                    & 11.14                     & 1.52  & 8.44 \\
		\bottomrule
	\end{tabular}
	\caption{Comparison of various distillation methods using FID and Teacher-FID metrics on the CIFAR-10 dataset. Note that for PFMM, no velocity model (e.g. from a stochastic interpolant) is needed. It relies solely on the minimization of \eqref{eq:obj:match} and \eqref{eq:obj:match:D}.
		Baseline indicates the FID of the teacher model (a velocity field for EMD and LMD integrated with
		an adaptive fifth-order Runge-Kutta scheme, and a flow map for PFMM) against the true data.
	}
	\label{tab:comparison}
\end{table}

For learning of the flow map, we use a U-Net architecture following~\citep{dhariwal_diffusion_2021}. For LMD and EMD that require a pre-trained velocity field to distill, we also use a U-Net architecture for $b$.
Because the flow map $X_{s, t}$ is a function of two times, we modify the architecture.
Both $s$ and $t$ are embedded identically to $t$ in the original architecture.
The result is concatenated and treated like $t$ in the original architecture for downstream layers.
We benchmark the performance of the methods using the Frechet Inception Distance (FID), which computes a measure of similarity between real images from the dataset and those generated by our models.
In addition, we compute what we denote as the Teacher-FID (T-FID).
This metric computes the same measure of similarity, but now between images generated by the teacher model and those generated by the distilled model, rather than leveraging the original dataset.
This measure allows us to directly benchmark the convergence of the distillation method, as it captures discrepancies between the distribution of samples generated by the teacher and the distribution of samples generated by the student.
In addition, this allows us to benchmark accuracy independent of the overall performance of the teacher, as our teacher models were trained with limited compute.

\paragraph{Sampling efficiency}
In Table \ref{tab:comparison}, we compute the FID and T-FID for the stochastic interpolant, Eulerian, Lagrangian, and progressive distillation models on 2 and 4-step generation for CIFAR-10.
The stochastic interpolant was trained to a baseline FID (sampling with an adaptive solver) of 5.53, and was used as the teacher for EMD and LMD.
The teacher for PFMM was an FMM model trained with $|t-s| < 0.25$ to an FID of 8.44 using 8-step sampling. We observe that LMD and EMD methods can effectively distill their teachers and obtain low T-FID scores.
In addition, the 2 and 4-step samples from these methods far outperform the stochastic interpolant. This sampling efficiency is also apparent in the left side of~\cref{fig:performance_comp}, in which with just 1 to 4 steps, the LMD and PFMM methods can produce effective samples, particularly when compared to the flow matching approach.

\begin{wraptable}[14]{r}{0.5\textwidth} 
	\vspace{2.5mm}
	\centering
	\begin{tabular}{cccc}
		\toprule
		$\mathbf{N}$ & \textbf{DDPM} & \textbf{BatchOT} & \textbf{FMM (Ours)} \\
		\midrule
		20           & 63.08         & \textbf{7.71}    & 9.68                \\
		8            & 232.97        & 15.64            & \textbf{12.61}      \\
		6            & 275.28        & 22.08            & \textbf{14.48}      \\
		4            & 362.37        & 38.86            & \textbf{16.90}      \\
		\bottomrule
	\end{tabular}
	\caption{FID scaling with number of function evaluations $N$ to produce a sample on ImageNet-$32\times32$.
		We show a comparison between DDPM~\citep{ho2020denoising} and multi-sample Flow Matching using the BatchOT method \citep{pooladian2023multisample} to flow map matching.
		The first two columns are quoted from \citet{pooladian2023multisample}.
		Note that no distillation is used here, but rather direct minimization of \eqref{eq:obj:match}, using $|t-s| < 0.25$.
	}
	\label{tab:scaling}
\end{wraptable}
Without any distillation, FMM can also produce effective few-step maps.
Training an FMM model on the ImageNet-$32\times32$ dataset, we observe (Table \ref{tab:scaling}) that FMM achieves much better few-step FID when compared to denoising diffusion models (DDPM), and better FID than mini-batch OT interpolant methods \citep{pooladian2023multisample}.
In the higher-step regime, the interpolant methods perform marginally better.

\paragraph{Eulerian vs Lagrangian distillation}
Remarkably, we find a stark performance gap between the Eulerian and Lagrangian distillation schemes.
This is evident in both parts of~\cref{fig:performance_comp}, where we see that higher-step sampling with EMD only marginally improves image quality, and where the LMD loss for both CIFAR10 and ImageNet-$32\times32$ converges an order of magnitude faster than the EMD loss.
The same holds for FIDs over training, given in the right-most plot in the figure.
Note that both LMD and EMD loss functions have a global minimum at 0, so that the loss plots suggest continued training will improve distillation quality, but at very different rates.

\subsubsection{Efficient Style Transfer}
To illustrate some of the downstream tasks facilitated by our approach, we now describe a means of performing style transfer with the two-time invertible flow map that we call ``consistency style transfer''.
A class conditional sample $x_1$ with class label $y$ can be partially inverted to an earlier time $s'$ via $X_{1,s'}(x_1; y)$ for $s' < 1$.
From here, replacing the class label with $y' \neq y$, we can resample the flow map $X_{s', 1}(X_{1, s'}(x_1; y); y')$ to sample the conditional distribution associated to $y'$ while maintaining the style of the original class.
We demonstrate this principle in Figure \ref{fig:style}, which shows that maps discretized with $n=8$ step can be used convert between classes.
\begin{figure}[t]
	\centering
	\vspace{-20pt} 
	\includegraphics[width=0.5\linewidth]{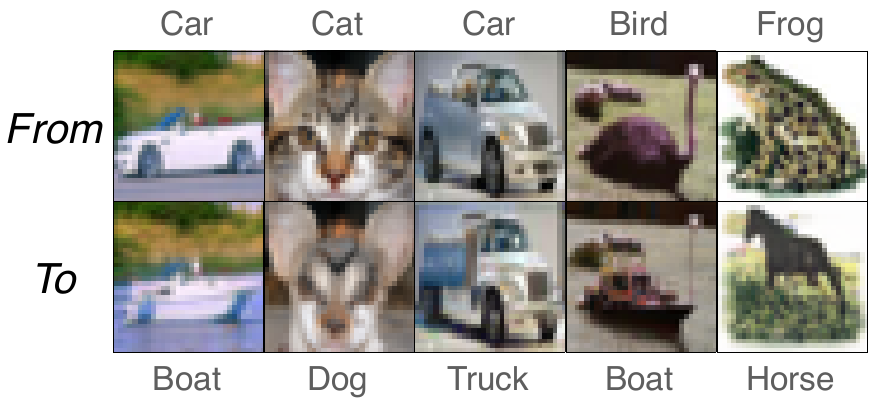}
	\caption{Consistency style transfer on CIFAR-10. Original class-conditional images from the dataset (top row) are pushed backward in time to $X_{1, s'=0.3}(x_1; y)$ and then pushed forward using a new class. The original styles of the images are maintained, while their subjects are replaced with those from the new class (bottom row).}
	\label{fig:style}
\end{figure}
This serves both as verification of the cycle consistency as well as an illustration of the potential applications of the two-time flow map.
In practice, the extent of the preserved style depends on how far back in time towards the Gaussian base at $s'=0$ we push the original sample.
Here, we use $s'=0.3$.

\section{Conclusion}
\label{sec:conc}
In this work, we developed a framework for learning the two-time flow map for generative modeling: either by distilling a pre-trained velocity model with the Eulerian or Lagrangian losses, or by directly training in the stochastic interpolant framework.
We empirically observe that while using more steps with the learned map improves sample quality, a substantially lower number is needed when compared to other generative models built on dynamical transport.
Future work will investigate how to improve the training and the neural network architecture so as to further reduce the number of steps without sacrificing accuracy, and to improve convergence for direct training of one-step maps.


\bibliographystyle{unsrtnat}
\bibliography{paper}

\newpage
\appendix
\section{Probability flow ODEs from stochastic interpolants and score-based diffusion models}
\label{app:si_diff}
For the reader's convenience, we now recall how to construct probability flow ODEs using either flow matching with stochastic interpolants or score-based diffusion models. We also recall the connection between these two formalisms.

\subsection{Transport equation}
\label{app:transp:meas}

Generative models based on probability flow ODEs leverage the property that solutions to~\eqref{eqn:ode} push forward their initial conditions onto samples from the target:
\begin{restatable}[Transport equation]{lemma}{transp}
	Let $\rho_t = \Law(x_t)$ be the PDF of the solution to~\eqref{eqn:ode} assuming that $x_0\sim \rho_0$. Then $\rho_t$ satisfies\begin{equation}
		\label{eq:transport}
		\partial_t \rho_t(x) + \nabla \cdot (b_t(x) \rho_t(x)) = 0, \qquad \rho_{t=0}(x)=\rho_0(x)
	\end{equation}
	where $\nabla$ denotes a gradient with respect to $x$.
\end{restatable}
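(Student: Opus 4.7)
The plan is to establish the transport equation in the weak (distributional) sense by testing against smooth compactly supported functions $\phi \in C^\infty_c(\R^d)$ and then upgrading to the stated strong form under sufficient regularity (which is guaranteed by \cref{as:one-sided} together with smoothness of $b$, as is implicit in the setting). This is the standard Liouville-type argument, so the proof is essentially a calculation; the content lies in carefully justifying an interchange of differentiation and expectation.

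First I would observe that, by the definition of $\rho_t = \Law(x_t)$, for any test function $\phi \in C^\infty_c(\R^d)$ we have
\begin{equation}
\int_{\R^d} \phi(x) \rho_t(x)\, dx = \E[\phi(x_t)].
\end{equation}
Next, I would differentiate both sides in $t$. On the right-hand side, assuming we can exchange $d/dt$ with $\E$ (justified by dominated convergence, using compact support of $\phi$ and local boundedness of $b_t$ along trajectories, which follows from~\cref{as:one-sided}), the chain rule together with $\dot x_t = b_t(x_t)$ from~\eqref{eqn:ode} gives
\begin{equation}
\frac{d}{dt}\E[\phi(x_t)] = \E\big[\nabla \phi(x_t) \cdot b_t(x_t)\big] = \int_{\R^d} \nabla \phi(x) \cdot b_t(x) \rho_t(x)\, dx.
\end{equation}
Integration by parts, using the compact support of $\phi$ to discard the boundary term, converts this into $-\int \phi(x) \nabla \cdot (b_t(x) \rho_t(x))\, dx$. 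On the left-hand side, again interchanging $d/dt$ with the spatial integral gives $\int \phi(x) \partial_t \rho_t(x)\, dx$. Equating the two yields
\begin{equation}
\int_{\R^d} \phi(x) \big[\partial_t \rho_t(x) + \nabla \cdot (b_t(x) \rho_t(x))\big] dx = 0
\end{equation}
for every $\phi \in C^\infty_c(\R^d)$, whence the integrand vanishes, establishing~\eqref{eq:transport}. The initial condition $\rho_{t=0} = \rho_0$ is immediate from $x_0 \sim \rho_0$.

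The main obstacle is not the formal manipulation but the justification of the differentiation-under-the-expectation/integral steps. I would handle this either by restricting to a regularity regime where $b$ is $C^1$ with suitable growth (so that the flow $X_{0,t}$ is $C^1$ and $\rho_t$ is classically differentiable, via the change-of-variables formula $\rho_t(x) = \rho_0(X_{t,0}(x))\,|\det \nabla X_{t,0}(x)|$), or by stating the conclusion in the weak/distributional sense, which is what the chain of equalities above already delivers under only~\cref{as:one-sided}. Either route is standard, so I would simply remark on the regularity assumed and refer to a textbook treatment (e.g. the Liouville equation in~\citet{villani2009optimal}) for the technical details.
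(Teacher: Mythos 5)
Your proposal is correct and follows essentially the same route as the paper's proof: both test against smooth test functions, differentiate $\E[\phi(x_t)]$ in time using the chain rule and the ODE $\dot x_t = b_t(x_t)$, and identify the result as the weak form of the transport equation before passing to the strong form under the standing regularity assumptions. Your additional remarks on justifying the interchange of differentiation and expectation are a reasonable elaboration of what the paper leaves implicit, but do not change the argument.
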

The interest of this result is that it can be used in reverse: if we show that a PDF $\rho_t$ satisfies the transport equation~\eqref{eq:transport}, then in practice we can use the probability flow ODE~\eqref{eqn:ode} to sample $\rho_t$ at any time $t>0$, so long as we can sample initial conditions from $\rho_0$.

\begin{proof}
	The proof proceeds via the weak formulation of~\cref{eq:transport}.
	Let $\phi\in C^1_b(\R^d) $ denote an arbitrary test function.
	By definition,
	\begin{equation}
		\label{def:exp:2}
		\forall t \in [0,1]\quad : \quad \int_{\R^d} \phi(x) \rho_t(x) dx =  \E[ \phi(x_t)]
	\end{equation}
	where $x_t$ is given by~\eqref{eq:stoch:interp} and where the expectation on the right hand-side is taken over the law of the initial conditions $x_{t=0}=x_0\sim\rho_0$.
	Taking the time derivative of this equality, we deduce that
	\begin{equation}
		\label{def:exp:diff2}
		\begin{aligned}
			\int_{\R^d} \phi(x) \partial_t \rho_t(x) dx & =  \E[\dot x_t \cdot \nabla \phi(x_t) ]                &  & \text{by the chain rule}             \\
			                                            & = \E[b_t(x_t) \cdot \nabla \phi(x_t) ]                 &  & \text{using the ODE~\eqref{eqn:ode}}
			\\
			                                            & = \int_{\R^d} b_t(x) \cdot \nabla \phi(x) \rho_t(x) dx &  & \text{by definition of $\rho_t(x)$}
		\end{aligned}
	\end{equation}
	This is the weak form of the transport equation~\eqref{eq:transport}.
	It can be written as~\eqref{eq:transport}, since it admits strong solutions by our assumptions on $b_t(x)$.
\end{proof}

\subsection{Stochastic interpolants and probability flows}
\label{secsi}
The stochastic interpolant framework offers a simple and versatile paradigm to construct generative models.

\sidef*

\pflow*

\begin{proof}
	Let $\phi\in C^1_b(\R^d) $ denote an arbitrary test function.
	By definition,
	\begin{equation}
		\label{def:exp:I}
		\forall t \in [0,1]\quad : \quad \int_{\R^d} \phi(x) \rho_t(x) dx =  \E[ \phi(I_t)]
	\end{equation}
	where $I_t$ is given by~\eqref{eq:stoch:interp} and where the expectation on the right-hand side is taken over the coupling $(x_0, x_1) \sim \rho(x_0, x_1)$ and $z \sim \mathsf N(0, I)$.
	Taking the time derivative of this equality, we deduce that
	\begin{equation}
		\label{def:exp:diffI}
		\begin{aligned}
			\int_{\R^d} \phi(x) \partial_t \rho_t(x) dx & =  \E[\dot I_t \cdot \nabla \phi(I_t) ]                &  & \text{by the chain rule}                                    \\
			                                            & = \E[\E[\dot I_t|I_t] \cdot \nabla \phi(I_t) ]         &  & \text{by the tower property of the conditional expectation} \\
			                                            & = \E[b_t(I_t) \cdot \nabla \phi(I_t) ]                 &  & \text{by definition of $b_t(x)$ in \eqref{eqn:b:def}}
			\\
			                                            & = \int_{\R^d} b_t(x) \cdot \nabla \phi(x) \rho_t(x) dx &  & \text{by definition of $\rho_t(x)$}
		\end{aligned}
	\end{equation}
	This is the weak form of the transport equation~\eqref{eq:transport}.
\end{proof}

This result implies that the PDF $\rho_t$ of the stochastic interpolant $I_t$  is also the PDF of the solution $x_t$ of the probability flow ODE~\eqref{eqn:ode} with the velocity field $b_t(x)$ defined in~\eqref{eqn:b:def}.
Algorithmically, this means that we can use the associated flow map as a generative model.

\subsection{Score-based diffusion models}
\label{sec:sbdm}

For simplicity, we focus on variance-preserving diffusions; extensions to more general classes of diffusions are straightforward.
These models are based on variants of the Ornstein-Uhlenbeck process, defined as the solution to the stochastic differential equation (SDE):
\begin{equation}
	dX_t = -X_t dt +\sqrt{2}\,dW_t, \qquad X_{t=0} = a\sim \rho_*,
	\label{sde1}
\end{equation}
where $W_t$ is a Wiener process.
The solution to \eqref{sde1} for the initial condition $X_{t=0}=a$ is:
\begin{equation}
	X_t = a\,e^{-t}+\sqrt{2}\int_0^t  e^{-t+t'}dW_{t'}.
	\label{sde2}
\end{equation}
By the It\^o Isometry, the second term on the right-hand side is a Gaussian process with mean zero and covariance given by
\begin{equation}
	\E\left[\left(\sqrt{2}\int_0^te^{-t+t'}dW_{t'}\right)^2\right]= 2\Id \int_0^te^{-2t+2t'}dt' = (1-e^{-2t})\Id.
\end{equation}
Therefore, at any time $t$, the law of~\eqref{sde2} is that of a Gaussian random variable with mean $x e^{-t}$ and covariance $(1-e^{-2t})\Id$.
That is, it can be represented as
\begin{equation}
	X_t \stackrel{d}{=} a\,e^{-t}+\sqrt{1-e^{-2t}} z,
	\label{sde3}
\end{equation}
where $a\sim \rho_*$, $z\sim  N(0,\Id)$, and $a\perp z$.
\eqref{sde3} shows that the law of $X_t$ converges exponentially fast to that of the standard normal variable as $t\to\infty$ for all initial conditions~$a$.

The key insight of score-based diffusion models is to time-reverse the SDE \eqref{sde1} to obtain a process that turns Gaussian samples into samples from the target.
Alternatively, this time-reversal can be performed using the associated probability flow ODE.
To do so, we can start from the evolution equation for the PDF of the solution to the SDE~\eqref{sde1}. Denoting this PDF by $\tilde \rho_t$,  it satisfies the Fokker-Planck equation (FPE)
\begin{equation}
	\label{eqn:vp_diff_fpe}
	\partial_t \tilde\rho_t = \nabla\cdot(x\tilde\rho_t) +\Delta\tilde\rho_t, \qquad \tilde \rho_0 = \rho_*.
\end{equation}
To time-reverse the solution to~\eqref{eqn:vp_diff_fpe} we define $\rho_t = \tilde \rho_{T-t}$ for some large $T>0$, and derive an equation for $\rho_t$ from~\eqref{eqn:vp_diff_fpe}:
\begin{equation}
	\begin{aligned}
		\partial_t \tilde\rho_t & = -\partial_t \tilde \rho_{T-t}                                       \\
		                        & = - \nabla\cdot(x \tilde\rho_{T-t}) - \Delta \tilde \rho_{T-t}        \\
		                        & = - \nabla\cdot([x + \nabla \log \tilde \rho_{T-t} ]\tilde\rho_{T-t}) \\
		                        & = - \nabla\cdot([x + \nabla \log \rho_{t} ]\rho_{t})
		\label{eq:fp:rev}
	\end{aligned}
\end{equation}
where we used the identity $\Delta  \tilde \rho_{T-t} = \nabla \cdot( \tilde \rho_{T-t} \nabla \log \tilde \rho_{T-t})$. Equation~\eqref{eq:fp:rev} can be written as
\begin{equation}
	\partial_t \tilde\rho_t + \nabla\cdot([x + s_t(x)] \rho_t) =0, \qquad \rho_{t=0} = \tilde \rho_{t=T} \approx N(0,\Id)
	\label{eq:fp:transp}
\end{equation}
where $s_t(x) = \nabla \log \rho_t(x) = \nabla \log \tilde \rho_{T-t}(x)$ is the score.
This quantity can be estimated by regression using the solutions to~\eqref{sde1}, which are explicitly available via~\eqref{sde3}.
Equation~\eqref{eq:fp:transp} is in the form of~\eqref{eq:transport} with a velocity field $b_t(x)$ given by
\begin{equation}
	\label{eq:pflow:sb}
	b_t(x) = x + s_t(x).
\end{equation}
Flow map matching can therefore be used to distill score-based diffusion models using this velocity field.

\subsection{Connections}
\label{app:connect}

Here we show how score-based diffusion models can be related to stochastic interpolants.
Recall that the law of the solution to the SDE~\eqref{sde1} is given by~\eqref{sde3}.
If we change time according to $t \mapsto - \log t$, we map $[0,\infty)$ onto $[1,0)$ and we arrive at
\begin{equation}
	X_{-\log t} \stackrel{d}{=} I_t =  a t+\sqrt{1-t^2} \, z, \qquad  a\sim \rho_*, \ \  z\sim  N(0,\Id), \ \text{ and } \ a\perp z.
	\label{sde4}
\end{equation}
This is a valid stochastic interpolant if we set $a=x_1$, $\beta_t=t$, $\alpha_t = 0$, and $\gamma_t = \sqrt{1-t^2} $ in~\eqref{eq:stoch:interp}\footnote{Note that $\gamma_0 =1$ here, consistent with the fact that the base PDF is Gaussian, so that $x_0$ and $z$ can be lumped into a single Gaussian variable}.
Hence, we have shown that the probability flow of the score-based diffusion model with velocity field~\eqref{eq:pflow:sb} can also be studied by using the stochastic interpolant~\eqref{sde4}.
Note that this reformulation has the advantage that it eliminates the small bias incurred by the finite value of $T$ in score-based diffusion models, since the stochastic interpolant~\eqref{sde4} transports the Gaussian random variable $z$ at  time $t=0$ onto the data point $a$ at time $t=1$ exactly.

\section{Proofs for Section \ref{sec:theory}}
\label{app:theory}

\flowmap*
\begin{proof}
	Taking the derivative with respect to $t$ of $X_{s,t}(x_s) = x_t$, we deduce
	\begin{equation}
		\label{eq:Lag:1}
		\partial_t X_{s,t}(x_s) = \dot x_t = b_t(x_t) = b_t(X_{s,t}(x_s))
	\end{equation}
	where we used the ODE~\eqref{eqn:ode} to obtain the second equality.
	Conversely, since the solutions to~\eqref{eqn:ode} and~\eqref{eqn:flow_map_dyn:L} from a given initial condition are unique, if we solve~\cref{eqn:flow_map_dyn:L} on $[s,t]$ with the condition $X_{s,s}(x_s)=x_s$ we must have $X_{s,t}(x_s) = x_t$ for all $(s,t)\in[0,1]^2$. Evaluating this expression at $x_s=x$ gives~\eqref{eqn:flow_map_dyn:L}. Also, for all $(s,t,t)\in [0,T]^3$, we have
	\begin{equation}
		\label{eq:Lag:2}
		X_{t,t}(X_{s,t}(x_s)) =X_{t,t}(x_t) = x_t = X_{s,t}(x_s).
	\end{equation}
	Evaluating this expression at $x_s=x$ gives~\eqref{eq:semigroup}.
\end{proof}

\lagdistill*
\begin{proof}
	Equation~\eqref{eqn:distillation_loss} is a physics-informed neural network (PINN)~\citep{raissi_physics-informed_2019} loss that is minimized only when the integrand is zero, i.e., when~\eqref{eqn:flow_map_dyn:L} holds.
\end{proof}

\euleq*
\begin{proof}
	Taking the derivative with respect to $s$ of  $X_{s,t}(X_{t,s}(x)) = x$ and using the chain rule, we deduce that
	\begin{equation}
		\label{eq:Eul:1}
		\begin{aligned}
			0 = \frac{d}{ds} X_{s,t}(X_{t,s}(x)) & =\partial_s X_{s,t}(X_{t,s}(x)) + \partial_s X_{t,s}(x) \cdot \nabla X_{s,t}(X_{t,s}(x)) \\
			                                     & =\partial_s X_{s,t}(X_{t,s}(x)) + b_s( X_{t,s}(x)) \cdot \nabla X_{s,t}(X_{t,s}(x))
		\end{aligned}
	\end{equation}
	where we used~\cref{eqn:flow_map_dyn:L} to get the last equality. Evaluating this expression at $X_{t,s}(x) = y$, then changing $y$ into $x$, gives~\cref{eqn:backward}.
\end{proof}

\euldistill*
\begin{proof}
	Equation~\eqref{eqn:L_backward} is a PINN loss that is minimized only when the integrand is zero, i.e, when~\eqref{eqn:backward} holds.
\end{proof}

\lagrangianerr*
\begin{proof}
	First observe that, by the one-sided Lipschitz condition~\eqref{eq:one:sided:lip},
	\begin{equation}
		\label{eq:no:spread}
		\begin{aligned}
			\partial_t |X_{s,t}(x) - X_{s,t}(y)|^2 & = 2 (X_{s,t}(x)-X_{s,t}(y))\cdot (b_t(X_{s,t}(x))-b_t(X_{s,t}(y))), \\
			                                       & \le 2C_t |X_{s,t}(x) - X_{s,t}(y)|^2.
		\end{aligned}
	\end{equation}
	Equation~\eqref{eq:no:spread} highlights that~\eqref{eq:one:sided:lip} gives a bound on the spread of trajectories.
	We note that we allow for $C_t<0$, which corresponds to globally contracting maps.
	Given~\eqref{eq:no:spread}, we now define
	\begin{equation}
		\label{eq:error:diag}
		E_{s,t} = \E\big[\big|X_{s, t}(I_s) - \hat{X}_{s, t}(I_s)\big|^2\big],
	\end{equation}
	where we recall that $X_{s,t}(x)$ satisfies $\partial_t X_{s,t}(x) = b_t(X_{s,t}(x))$ and $X_{s,s}(x) = x$.
	Taking the derivative with respect to $t$ of~\eqref{eq:error:diag}, we deduce
	\begin{equation}
		\label{eq:bound:err:1}
		\begin{aligned}
			\partial_t E_{s,t} & = 2\E\big[\big(X_{s, t}(I_s) - \hat{X}_{s, t}(I_s)\big)\cdot\big(b_t(X_{s, t}(I_s)) - \partial_t \hat{X}_{s, t}(I_s)\big)\big],                               \\
			                   & = 2 \E\big[\big(X_{s, t}(I_s) - \hat{X}_{s, t}(I_s)\big)\cdot\big(b_t(\hat X_{s, t}(I_s)) - \partial_t \hat{X}_{s, t}(I_s)\big)\big]                          \\
			                   & \quad + 2\E\big[\big(X_{s, t}(I_s) - \hat{X}_{s, t}(I_s)\big)\cdot\big(b_t(X_{s, t}(I_s))-b_t(\hat X_{s, t}(I_s)) \big)\big],                                 \\
			                   & \leq  \E\big[\big|X_{s, t}(I_s) - \hat{X}_{s, t}(I_s)\big|^2\big] +  \E\big[\big|b_t(\hat X_{s, t}(I_s)) - \partial_t \hat{X}_{s, t}(I_s)\big|^2\big]         \\
			                   & \quad +  2\E\big[\big(X_{s, t}(I_s) - \hat{X}_{s, t}(I_s)\big)\cdot\big(b_t(X_{s, t}(I_s))-b_t(\hat X_{s, t}(I_s)) \big)\big],                                \\
			                   & \equiv E_{s,t} + \delta^{\lmd}_{s,t} +  2\E\big[\big(X_{s, t}(I_s) - \hat{X}_{s, t}(I_s)\big)\cdot\big(b_t(X_{s, t}(I_s))-b_t(\hat X_{s, t}(I_s)) \big)\big].
		\end{aligned}
	\end{equation}
	Above, we defined the two-time Lagrangian distillation error,
	\begin{equation}
		\delta^{\lmd}_{s,t} = \E\big[\big|b_t(\hat X_{s, t}(I_s)) - \partial_t \hat{X}_{s, t}(I_s)\big|^2\big].
	\end{equation}
	By definition, the LMD loss can be expressed as  $L_{\lmd}(\hat X) = \int_{[0,T]^2} w_{s,t} \delta^{\lmd}_{s,t} ds dt$.
	Using \eqref{eq:one:sided:lip} in \eqref{eq:bound:err:1}, we obtain the relation
	\begin{equation}
		\begin{aligned}
			 & \partial_t E_{s,t} \le  (1+2C_t)E_{s,t} + \delta_{s,t}^{\lmd},
		\end{aligned}
	\end{equation}
	which implies that
	\begin{equation}
		\begin{aligned}
			\partial_t \big( e^{-t-2\int_s^t C_u du}E_{s,t} \big) \le  e^{-t-2\int_s^t C_u du} \delta_{s,t}^{\lmd}.
		\end{aligned}
	\end{equation}
	Since $E_{s,s} = 0$ this implies that
	\begin{equation}
		E_{s,t} \le \int_s^t e^{(t-u)+2\int_u^t C_t dt} \delta^{\lmd}_{s,u} du \le e^{t+2\int_s^t |C_t| dt} \int_s^t \delta^{\lmd}_{s,u} du.
	\end{equation}
	Above, we used that $(t, u) \in [0, t]^2$ so that $(t - u) \leq t$.
	This bound shows that $E_{0,1} \le e^{1+2\int_0^1 |C_t| dt} \int_0^1 \delta^{\lmd}_{0,u} du$, which can be written explicitly as (using $t$ instead of $u$ as dummy integration variable)
	\begin{equation}
		\E\big[\big|X_{0, 1}(x_0) - \hat{X}_{0, 1}(x_0)\big|^2\big] \le e^{1+2\int_0^1 |C_t| dt} \int_0^1 \E\big[\big|b_t(\hat X_{0, t}(x_0)) - \partial_t \hat{X}_{0, t}(x_0)\big|^2\big]dt,
	\end{equation}
	Now, observe that by definition,
	\begin{equation}
		\label{eqn:wasserstein_bound}
		W_2^2(\rho_1, \hat{\rho}_1) \leq \E\big[\big|X_{0, 1}(x_0) - \hat{X}_{0, 1}(x_0)\big|^2\big],
	\end{equation}
	because the left-hand side is the infimum over all couplings and the right-hand side corresponds to a specific coupling that pairs points from the same initial condition. This completes the proof.
\end{proof}

\eulerianerr*
\begin{proof}
	We first define the error metric
	\begin{equation}
		E_{s, t} = \E\left[\big|X_{s, t}(I_s) - \hat{X}_{s, t}(I_s)\big|^2\right].
	\end{equation}
	It then follows by direct differentiation that
	\begin{equation}
		\begin{aligned}
			\partial_s E_{s, t} & = \E\left[2\left(X_{s, t}(I_s) - \hat{X}_{s, t}(I_s)\right)\cdot\left(\partial_s X_{s, t}(I_s) + \dot{I}_s\cdot\nabla X_{s, t}(I_s) - \left(\partial_s \hat{X}_{s, t}(I_s) + \dot{I}_s \cdot \nabla \hat{X}_{s, t}(I_s)\right)\right)\right], \\
			                    & = \E\left[2\left(X_{s, t}(I_s) - \hat{X}_{s, t}(I_s)\right)\cdot\left(\partial_s X_{s, t}(I_s) + b_s(I_s)\cdot\nabla X_{s, t}(I_s) - \left(\partial_s \hat{X}_{s, t}(I_s) + b_s(I_s) \cdot \nabla \hat{X}_{s, t}(I_s)\right)\right)\right],   \\
			                    & \geq -E_{s, t} - \E\left[\left|\partial_s X_{s, t}(I_s) + b_s(I_s)\cdot\nabla X_{s, t}(I_s) - \left(\partial_s \hat{X}_{s, t}(I_s) + b_s(I_s) \cdot \nabla \hat{X}_{s, t}(I_s)\right)\right|^2\right],                                        \\
			                    & = -E_{s, t} - \delta_{s, t}^{\emd}.
			\nonumber
		\end{aligned}
	\end{equation}
	Above, we used the tower property of the conditional expectation, the Eulerian equation $\partial_s X_{s, t}(I_s) + b_s(I_s)\cdot\nabla X_{s, t}(I_s) = 0$, and defined the two-time Eulerian distillation error,
	\begin{equation}
		\delta_{s, t}^{\emd} = \E\left[\left|\partial_s \hat{X}_{s, t}(I_s) + b_s(I_s) \cdot \nabla \hat{X}_{s, t}(I_s)\right|^2\right].
	\end{equation}
	This implies that
	\begin{equation}
		\partial_s \left(-e^{s}E_{s, t}\right) \leq e^{s}\delta_{s t}^{\emd}.
	\end{equation}
	Using that $E_{t, t} = 0$ for any $t \in [0, T]$ and integrating with respect to $s$ from $s$ to $t$,
	\begin{equation}
		-e^{t}E_{t, t} + e^sE_{s, t} \leq \int_s^t e^{u}\delta_{u, t}^{\emd}du.
	\end{equation}
	It then follows that
	\begin{equation}
		E_{s, t} \leq \int_s^t e^{u-s}\delta_{u, t}^{\emd}du,
	\end{equation}
	and hence, using that $u-s \in [0, t]$ and that $\delta_{u, t}^{\emd} \geq 0$,
	\begin{equation}
		\E\big[\big|X_{0, 1}(x_0) - \hat{X}_{0, 1}(x_0)\big|^2\big] \le e^1\int_0^1 \E\left[\left|\partial_s \hat{X}_{s, 1}(I_s) + b_s(I_s) \cdot \nabla \hat{X}_{s, 1}(I_s)\right|^2\right]ds.
	\end{equation}
	The proof is completed upon noting that
	\begin{equation}
		W_2^2(\rho_1, \hat{\rho}_1) \leq \E\big[\big|X_{0, 1}(x_0) - \hat{X}_{0, 1}(x_0)\big|^2\big],
	\end{equation}
	because the left-hand side is the infimum over all couplings and the right-hand side corresponds to a particular coupling.
\end{proof}

\fmm*
\begin{proof} We start by noticing that
	\begin{equation}
		\label{eq:step1}
		\begin{aligned}
			 & \E \big [| \partial_t \hat X_{s,t} (\hat X_{t,s}(I_t)) - \dot{I}_t|^2,                                                                           \\
			 & = \E \big [| \partial_t \hat X_{s,t} (\hat X_{t,s}(I_t))|^2 - 2 \dot{I}_t \cdot \partial_t \hat X_{s,t} (\hat X_{t,s}(I_t)) + |\dot I_t|^2\big], \\
			 & = \E \big [| \partial_t \hat X_{s,t} (\hat X_{t,s}(I_t))|^2 - 2 \E[ \dot I_t|I_t]\cdot
			\partial_t \hat X_{s,t} (\hat X_{t,s}(I_t)) + |\dot I_t|^2\big],                                                                                    \\
			 & = \E \big [| \partial_t \hat X_{s,t} (\hat X_{t,s}(I_t))|^2 - 2 b_t(I_t)\cdot
				\partial_t \hat X_{s,t} (\hat X_{t,s}(I_t)) + |\dot I_t|^2\big],
		\end{aligned}
	\end{equation}
	where we used the tower property of the conditional expectation to get the third equality and the definition of $b_t(x)$ in~\eqref{eqn:b:def} to get the last. This means that the loss~\eqref{eq:obj:match} can be written as
	\begin{equation}
		\label{eq:obj:match:2}
		\begin{aligned}
			 & \calL_{\fmmlab}(\hat X)                                                                                                                                               \\
			 & = \int_{[0,1]^2} \int_{\R^d} w_{s,t} \big [| \partial_t \hat X_{s,t} (\hat X_{t,s}(x)) - b_t(x)|^2 + | \hat X_{s,t} (\hat X_{t,s}(x)) - x|^2 \big] \rho_t(x) dx ds dt \\
			 & \qquad + \int_{[0,1]^2} w_{s,t} \E\big[ |\dot I_t|^2 - |b_t(I_t)|^2\big] ds dt,
		\end{aligned}
	\end{equation}
	where $\rho_t = \Law(I_t)$. The second integral does not depend on $\hat X$, so it does not affect the minimization of $\calL_{\fmmlab}(\hat X)$. Assuming that $w_{s,t}>0$, the first integral is minimized if and only if we have
	\begin{equation}
		\label{eq:obj:match:3}
		\begin{aligned}
			\forall \:\: (s,t,x) \in [0,1]^2\times \R^d \ : \quad  \partial_t \hat X_{s,t} (\hat X_{t,s}(x)) = b_t(x) \quad \text{and} \quad \hat X_{s,t} (\hat X_{t,s}(x)) = x.
		\end{aligned}
	\end{equation}
	From the second of these equations it follows that: (i)  $\hat X_{s,s}(x) =x$, and (ii) if we evaluate the first equation at $y= \hat X_{t,s}(x)$, this equation reduces to
	\begin{equation}
		\label{eq:obj:mmm:sol}
		\forall \:\: (s,t,y) \in [0,1]^2\times \R^d \ : \quad\partial_t \hat X_{s,t}(y) = b_t(\hat X_{s,t}(y))
	\end{equation}
	which recovers~\eqref{eqn:flow_map_dyn:L}.
\end{proof}

\emm*

\begin{proof}
	The functional gradient of~\eqref{eq:obj:eul:match} with respect to $\hat{X}_{s, t}$ is given by
	\begin{equation}
		\begin{aligned}
			 & -\partial_s\big( w_{s,t} \big[ \partial_s \hat{X}_{s, t}(x) + \E\big[\dot{I}_s\cdot\nabla \hat{X}_{s, t}(I_s) \mid I_s = x\big] \big]\rho_s(x)\big) \\
			 & =  -\partial_s\big( w_{s,t} \big[\partial_s \hat{X}_{s, t}(x) + b_s(x) \cdot\nabla \hat{X}_{s, t}(x) \big]\rho_s(x)\big).
		\end{aligned}
	\end{equation}
	The flow map zeroes this quantity since it solves the Eulerian equation~\eqref{eqn:backward} that appears in it, and hence is a critical point of the objective.
\end{proof}

\fmmd*

\begin{proof}
	Equation~\eqref{eq:obj:match:D} is a PINN loss whose unique minimizer satisfies
	\begin{align}
		\label{eq:obj:match:D:s}
		\forall \:\: (s,t,x) \in [0,1]^2\times \R^d  \ :  \quad  \check X_{s,t} (x) = \big(\hat{X}_{t_{K-1}, t_K}\circ \cdots \circ \hat{X}_{t_1, t_2}\big)(x),
	\end{align}
	which establishes the claim.
\end{proof}


\section{Relation to existing consistency and distillation techniques}
\label{sec:app:relation}
In this section, we recast consistency models and several distillation techniques in the language of our two-time flow map $X_{s, t}$ to clarify their relation with our work.

\subsection{Flow maps and denoisers}
\label{se:fm:denoi}

Since $\Law(X_{t,s}(I_t)) = \Law(I_s)$, it is tempting to replace $X_{t,s}(I_t)$ by $I_s$ in the loss~\eqref{eq:obj:match} and use instead
\begin{equation}
	\label{eq:obj:match:denoise}
	\calL_{\text{denoise}}[\hat X] = \int_{[0,1]^2} w_{s,t} \E\big [| \partial_t \hat X_{s,t} (I_s) - \dot I_t|^2 \Big]  ds dt,
\end{equation}
minimized over all $\hat X$ such that $\hat X_{s,s}(x)=x$.
However, the minimizer of this objective is \textit{not} the flow map $X_{s,t}$, but rather the denoiser
\begin{equation}
	\label{eq:denoise}
	X^{\text{denoise}}_{s,t}(x) = \E [ I_t| I_s = x].
\end{equation}
This can be seen by noticing  that  the minimizer of~\eqref{eq:obj:match:denoise} is the same as the minimizer of
\begin{equation}
	\label{eq:obj:match:denoise:2}
	\begin{aligned}
		\calL'_{\text{denoise}}[\hat X] & = \int_{[0,1]^2} w_{s,t} \E\big [\big| \partial_t \hat X_{s,t} (I_s) - \E[\dot I_t|I_s]\big|^2 \Big]  ds dt,                      \\
		                                & = \int_{[0,1]^2} \int_{\R^d}w_{s,t} \Big [\big| \partial_t \hat X_{s,t} (x) - \E[\dot I_t|I_s=x]\big|^2 \Big] \rho_s(x) dx ds dt,
	\end{aligned}
\end{equation}
which follows from an argument similar to the one used in the proof of \Cref{prop:fmm}. The minimizer of~\eqref{eq:obj:match:denoise:2} satifies
\begin{equation}
	\label{eq:denoise:diff}
	\partial_t \hat X_{s,t}(x) = \E [ \dot I_t| I_s = x] = \partial_t \E [ I_t| I_s = x],
\end{equation}
which implies~\eqref{eq:denoise} by the boundary condition $\hat X_{s,s}(x) = x$.
The denoiser~\eqref{eq:denoise} may be useful, but it is not a consistent generative model.
For instance, if $x_0\sim\rho_0$ and $x_1\sim \rho_1$ are independent in the definition of $I_t$, since $I_0=x_0$ and $I_1=x_1$ by construction, for $s=0$ and $t=1$ we have
\begin{equation}
	\label{eq:denoise:onestep}
	X^{\text{denoise}}_{0,1}(x) = \E [x_1]
\end{equation}
i.e. the one-step denoiser only recovers the mean of the target density~$\rho_1$.

\subsection{Relation to consistency models}
\paragraph{Noising process.}
Following the recommendations in~\citet{karras_elucidating_2022} (which are followed by both~\citet{song_consistency_2023} and~\citet{song_improved_2023}), we consider the variance-exploding process\footnote{Oftentimes $t=0$ is set to $t=t_{\min}>0$ for numerical stability, choosing e.g. $t_{\min} = 2 \times 10^{-3}$. }
\begin{equation}
	\label{eqn:ve}
	\tilde x_t = a + t z, \:\:\: t \in [0, t_{\max}],
\end{equation}
where $a \sim \rho_1$ (data from the target density) and $z \sim \mathsf N(0, I)$.
In practice, practitioners often set $t_{\max} = 80$.
In this section, because we follow the score-based diffusion convention, we set time so that $t = 0$ recovers $\rho_1$ and so that a Gaussian is recovered as $t \to\infty$.
The corresponding probability flow ODE is given by
\begin{equation}
	\label{eqn:ve_pflow}
	\dot{\tilde x}_t = -t\nabla\log\rho_t(\tilde x_t), \qquad \tilde x_{t=0} = a \sim \rho_1
\end{equation}
where $\rho_t(x) = \Law(\tilde x_t)$.
In practice,~\eqref{eqn:ve_pflow} is solved backwards in time from some terminal condition $\tilde x_{t_{\max}}$.
To make contact with our formulation where time goes forward, we define ${x}_t = \tilde x_{\tmax-t}$, leading to
\begin{equation}
	\label{eqn:ve_fwd}
	\dot{x}_t = (\tmax-t)\nabla\log\rho_{\tmax-t}(x_t), \qquad x_{t=0} \sim \mathsf N(x_0,t_{\max}^2 I).
\end{equation}
To make touch with our flow map notation, we then define
\begin{equation}
	\label{eqn:ve_fwd_flow}
	\partial_t{X}_{s, t}(x) = (\tmax -t)\nabla\log\rho_{\tmax - t}(X_{s, t}(x)), \qquad X_{s,s}(x) = x.
\end{equation}

\paragraph{Consistency function.}
By definition~\citep{song_consistency_2023}, the consistency function $f_t: \R^d \to \R^d $ is such that
\begin{equation}
	\label{eqn:consistency_def}
	f_t(\tilde x_t) = a,
\end{equation}
where $\tilde x_t$ denotes the solution of~\eqref{eqn:ve_pflow} and $a\sim\rho_1$.
To make a connection with our flow map formulation, let us consider~\eqref{eqn:consistency_def} from the perspective of $x_t$,
\begin{equation}
	\label{eqn:consistency_def_tilde}
	f_t(x_{\tmax-t})= x_{\tmax},
\end{equation}
which is to say that
\begin{equation}
	\label{eqn:consistency_flow_map}
	f_t(x) = X_{\tmax-t, \tmax}(x).
\end{equation}
Note that only one time is varied here, i.e. $f_t(x)$, cannot be iterated upon: by its definition~\eqref{eqn:consistency_def}, it always maps the observation~$\tilde x_t$ onto a sample $a\sim\rho_1$.

\paragraph{Discrete-time loss function for distillation.}
In practice, consistency models are typically trained in discrete-time, by discretizing $[\tmin, \tmax]$ into a set of $N$ points $\tmin = t_1 < t_2 < \hdots < t_N = \tmax$.
According to~\citet{karras_elucidating_2022}, these points are chosen as
\begin{equation}
	\label{eqn:t_disc}
	t_i = \left(\tmin^{1/\eta} + \frac{i-1}{N-1}\left(\tmax^{1/\eta} - \tmin^{1/\eta}\right)\right)^\eta,
\end{equation}
with $\eta = 7$. Assuming that we have at our disposal a pre-trained estimate $s_t(x)$ of the score $\nabla \log \rho_t(x)$, the \textit{distillation loss} for the consistency function $f_t(x)$ is then given by
\begin{equation}
	\label{eqn:loss_consistency}
	\begin{aligned}
		\calL_{\mathsf{CD}}^N(\hat f) & = \sum_{i=1}^{N-1} \E\Big[\big|\hat f_{t_{i+1}}(\tilde x_{t_{i+1}}) - \hat f_{t_i}(\hat{x}_{t_i})\big|^2\Big], \\
		\tilde x_{t_{i+1}}            & = a + t_{i+1}z                                                                                                 \\
		\hat{x}_{t_i}                 & = \tilde x_{t_{i+1}} - \left(t_{i} - t_{i+1}\right)t_{i+1}s_{t_{i+1}} (x_{t_{i+1}}),
	\end{aligned}
\end{equation}
where  $\E$ is taken over the data $a \sim \rho_1$ and $z \sim \mathsf N(0, I)$.
The term $\hat{x}_{t_i}$ is an approximation of $\tilde x_{t_i}$ computed by taking a single step of~\eqref{eqn:ve_pflow} with the approximate score model~$s_t(x)$. In practice, the square loss in~\eqref{eqn:loss_consistency} can be replaced by an arbitrary metric $d:\R^d\rightarrow\R^d\rightarrow\R_{\geq 0}$, such as a learned metric like LPIPS or the Huber loss.

\paragraph{Continuous-time limit.}
In continuous-time, it is easy to see via Taylor expansion that the consistency loss reduces to
\begin{equation}
	\label{eqn:CD_continuous}
	\calL^\infty_{\mathsf{CD}}(\hat f) = \lim_{N\to\infty} N \calL_{\mathsf{CD}}^N(\hat f) =\int_{\tmin}^{\tmax}\int_{\R^d} w^2_t \big|\partial_t f_t(x) - t s_t(x)\cdot \nabla f_t(x)\big|^2\rho_t(x)dxdt,
\end{equation}
where $w_t = \eta (\tmax^{1/\eta}-\tmin^{1/\eta}) t^{1-1/\eta}$ is a weight factor arising from the nonuniform time-grid.
This is a particular case of our Eulerian distillation loss~\eqref{eqn:L_backward} applied to the variance-exploding setting~\eqref{eqn:ve} with the identification~\eqref{eqn:consistency_flow_map}.

\paragraph{Estimation vs distillation of the consistency model.}
If we approximate the exact
\begin{equation}
	\label{eqn:score_exact_ve}
	\nabla\log\rho_t(x) = -\E\left[\frac{\tilde x_t - a}{t^2} \Big| \tilde x_{t} = x\right],
\end{equation}
by a single-point estimator
\begin{equation}
	\label{eqn:score_approx_ve}
	\nabla\log\rho_t(x) \approx \frac{a - \tilde x_t}{t^2},
\end{equation}
we may use the expression
\begin{equation}
	\label{eqn:approx_xhat}
	\hat{x}_{t_i} \approx a + t_{i}z,
\end{equation}
in~\eqref{eqn:loss_consistency} to obtain the \textit{estimation} loss,
\begin{equation}
	\label{eqn:loss_consistency_training}
	\begin{aligned}
		\calL_{\mathsf{CT}}^N(\hat f) & = \sum_{i=1}^{N-1}\E\Big[\big|\hat f_{t_{i+1}}(\tilde x_{t_{i+1}}) - \hat f^{-}_{t_i}(\tilde x_{t_i})\big|^2\Big], \\
		\tilde x_{t_i}                & = a + t_i z.
	\end{aligned}
\end{equation}
This expression does not require a previously-trained score model.
Notice, however, that \eqref{eqn:loss_consistency_training} must be used with a $\mathsf{stopgrad}$ on $f^-_{t_i}(\tilde x_{t_i})$ so that the gradient is taken over only the first $\hat f_{t_{i+1}}(\tilde x_{t_{i+1}})$.
This is because~\eqref{eqn:loss_consistency} and \eqref{eqn:loss_consistency_training} are different objectives with different minimizers, even at leading order after expansion in $1/N$, for the same reason that~\eqref{eqn:L_backward} differs from \eqref{eqn:L_backward:2}.
To see this, observe that to leading order,
\begin{equation}
	\label{eq:expand:fstop}
	\hat f^{-}_{t_i}(\tilde x_{t_i}) = \hat f^{-}_{t_{i+1}}(\tilde x_{t_{i+1}}) + \big( \partial_t \hat f^{-}_{t_{i+1}}(\tilde x_{t_{i+1}}) + z \cdot \nabla f^{-}_{t_{i+1}}(\tilde x_{t_{i+1}})\big) (t_{i}-t_{i+1}) + O\big((t_{i}-t_{i+1})^2\big),
\end{equation}
which gives the continuous-time limit
\begin{equation}
	\label{eqn:CT_continuous}
	\calL^\infty_{\mathsf{CT}}(\hat f) = \lim_{N\to\infty} \calL_{\mathsf{CD}}^N(\hat f) =\int_{\tmin}^{\tmax}\int_{\R^d} w_t \big|\partial_t f_t(x) + z \cdot \nabla f^-_t(x)\big|^2\rho_t(x)dxdt.
\end{equation}
Observing that $z = \partial_t \tilde{x}_t$ shows that~\eqref{eqn:CT_continuous} recovers the Eulerian estimator described in~\cref{sec:eul:est}, which does not lead to a gradient descent iteration.

\subsection{Relation to neural operators}
In our notation, neural operator approaches for fast sampling of diffusion models~\citep{zheng_fast_2023} also estimate the flow map $X_{0,t}$ via the loss
\begin{equation}
	\label{eqn:fno_loss}
	\calL_{\mathsf{FNO}}(\hat X) = \int_0^1\int_{\R^d} \big|\hat X_{0, t}(x) - X_{0, t}(x)\big|^2 \rho_0(x) dx dt,
\end{equation}
where $\hat X_{0, t}$ is parameterized by a Fourier Neural Operator and where $X_{0, t}$ is the flow map \textit{obtained by simulating the probability flow ODE associated with a pre-trained (or given)} $b_t(x)$.
To avoid simulation at learning time, they pre-generate a dataset of trajectories, giving access to $X_{0, t}(x)$ for many initial conditions $x\sim \rho_0$.
Much of the work focuses on the architecture of the FNO itself, which is combined with a U-Net.

\subsection{Relation to progressive distillation}
Progressive distillation~\citep{salimans_progressive_2022} takes a DDIM sampler~\citep{song_denoising_2022} and trains a new model to approximate two steps of the old sampler with one step of the new model.
This process is iterated repeatedly to successively halve the number of steps required.
In our notation, this corresponds to minimizing
\begin{equation}
	\label{eqn:prog_dist}
	\calL_{\mathsf{PD}}^{\Delta t}(\hat X) = \int_0^{1-2\Delta t}\int_{\R^d}\big|\hat X_{t, t+2\Delta t}(x) - \big(X_{t + \Delta t, t+2\Delta t}\circ X_{t, t+\Delta t}\big)(x)\big|^2\rho_t(x) dx dt
\end{equation}
where $X$ is a pre-trained map from the previous iteration.
This is then iterated upon, and $\Delta t$ is increased, until what is left is a few-step model.

\begin{table}[t]
	\centering
	\begin{tabular}{lcccc}
		\toprule
		            & $\mathsf{KL}(\rho_1 || \hat \rho_1)$ & $W_2^2(\rho_1, \hat \rho_1)$ & $W_2^2(\hat{\rho}_1^b, \hat \rho_1)$ & $L_2$ error \\
		\midrule
		SI          & 0.020                                & 0.026                        & 0.0                                  & 0.000       \\
		LMD         & 0.043                                & 0.059                        & 0.032                                & 0.085       \\
		EMD         & 0.079                                & 0.029                        & 0.010                                & 0.011       \\
		FMM,  $N=1$ & 0.104                                & 0.021                        & --                                   & 0.026       \\
		FMM,  $N=4$ & 0.045                                & 0.014                        & --                                   & 0.024       \\
		PFMM, $N=1$ & 0.043                                & 0.014                        & --                                   & 0.023       \\
		\bottomrule
	\end{tabular}
	\captionof{table}{Comparison of $\mathsf{KL}(\rho || \hat \rho)$ and $W_2^2(\rho, \hat \rho)$, where  $\hat\rho$ is the pushforward density from the maps $\hat X_{0,1}(x_0)$ for the methods listed above. Additionally included is a comparison of  $L_2$ expected error of the distillation methods against their teacher $\hat X_{0,1}^{\si}$ given as $\mathbb E[ | \hat X_{0,1}^{\si}(x) - \hat X_{0,1}(x)|^2] $. Intriguingly, LMD performs better in being distributionally correct, as measured by the KL-divergence, but worse in preserving the coupling of the teacher model. The roles are flipped for EMD. This may highlight KL as a more informative metric in our case, as our aims are to sample correctly in distribution. See Figure \ref{fig:mislabel} for a visualization.}
	\label{tab:comparison:checker}
\end{table}%

\section{Additional Experimental Details}
\label{app:exp:checker}

\subsection{2D checkerboard}
Here, we provide further discussion and analysis of our results for generative modeling on the 2D checkerboard distribution (\cref{fig:checker}).
Our KL-divergence estimates clearly highlight that there is a hierarchy of performance.
Of particular interest is the large discrepancy in performance between the Eulerian and Lagrangian distillation techniques.

As noted in Figure \ref{fig:checker} and Table \ref{tab:comparison:checker}, LMD substantially outperforms its Eulerian counterpart in terms of minimizing the KL-divergence between the target checkerboard density $\rho_1$ and model density $\hat \rho_1 = \hat{X}_{0, 1}\sharp\rho_0$.
Interestingly, while LMD is more correct in distribution, EMD better preserves the original coupling $(x_0, \hat X_{0,1}^{\si}(x_0) )$ of the teacher model $\hat{X}_{0,1}^{\si}$, as measured by the $W_2^2$ distance and the expected $L_2$ reconstruction error, defined as $\mathbb E[ | \hat X_{0,1}^{\si}(x) - \hat X_{0,1}(x)|^2] $. Where this coupling is significantly \textit{not} preserved is visualized in Figure \ref{fig:mislabel}. For each model, we color code points for which $| \hat X_{0,1}^{\si}(x) - \hat X_{0,1}(x)|^2 > 1.0$, highlighting where the student map differed from the teacher. We notice that the LMD map pushes initial conditions to an opposing checker edge (purple) than where those initial conditions are pushed by the interpolant (blue). This is much less common for the EMD map, but its performance is overall worse in matching the target distribution.

\begin{figure}[h]
	\centering
	\includegraphics[width=\textwidth]{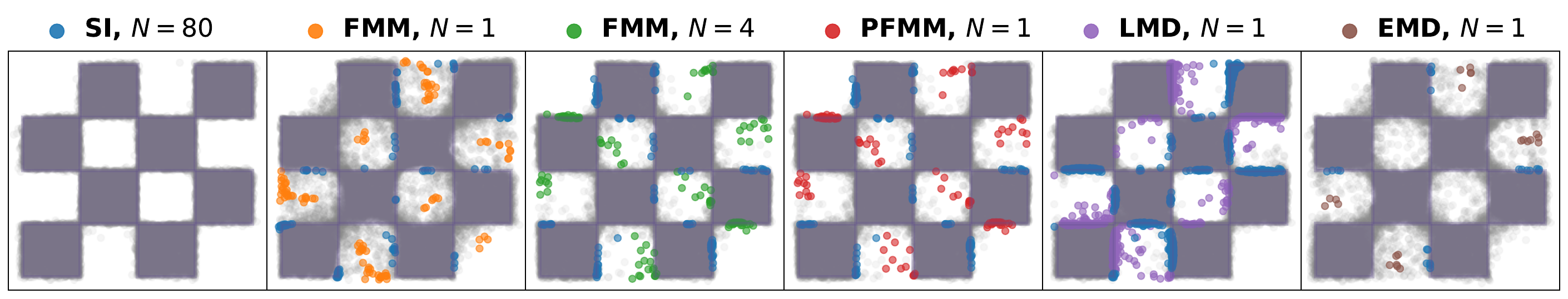}
	\caption{Visualization of the difference in assignment of the maps $\hat X_{0,1}(x_0)$ for the various models as compared to the teacher/ground truth model $\hat X^{\si}_{0,1}(x_0)$ for the same initial conditions from the base distribution $x_0$.  Points that lie in the region $|\hat X^{\si}_{0,1}(x_0) - \hat X_{0,1}(x_0) |^2 > 1.0$ are colored as compared to the blue points, which represent where the stochastic interpolant teacher mapped the same red initial conditions. This gives us an intuition for how well each method precisely maintains the coupling $(x_0, \hat X_{0,1}^{\si}(x_0) ) $ from the teacher. Note that we are treating $X_{0,1}^{\si}$ as the ground truth map here, as it is close to the exact map. The models based on FMM either don't have a teacher or have $\fmmlab, N=4$ as a teacher, but all should have the same coupling at the minimizer. }
	\label{fig:mislabel}
\end{figure}

\subsection{Image experiments}

Here we include more experimental details for reproducing the results provided in Section \ref{sec:exp}. We use the U-Net from the diffusion OpenAI paper \citep{dhariwal_diffusion_2021} with code given at \href{https://github.com/openai/guided-diffusion}{https://github.com/openai/guided-diffusion}. We use the same architecture for both CIFAR10 and ImageNet-$32\times32$ experiments. The architecture is also the same for training a velocity field and for training a flow map, barring the augmentation of the time-step embedding in the U-Net to handle two times ($s,t$) instead of one. Details of the training conditions are presented in Table \ref{tab:archs:img}.

\begin{table}[ht]
	\centering
	\begin{tabular}{lcc}
		\toprule                                       & CIFAR-10          & ImageNet 32$\times$32 \\
		\midrule Dimension                             & 32$\times$32      & 32$\times$32          \\
		\# Training point                              & $5\times 10^4$    & 1,281,167             \\
		\midrule Batch Size                            & 256               & 256                   \\
		Training Steps (Lagrangian distillation)       & 1.5$\times$10$^5$ & 2.5$\times$10$^5$     \\
		Training Steps (Eulerian distillation)         & 1.2$\times$10$^5$ & 2.5$\times$10$^5$     \\
		Training Steps (Flow map matching)             & N/A               & 1$\times$10$^5$       \\
		Training Steps (Progressive flow map matching) & 1.3$\times 10^5$  & N/A                   \\
		U-Net channel dims                             & 256               & 256                   \\
		Learning Rate (LR)                             & $0.0001$          & $0.0001$              \\
		LR decay (every 1k epochs)                     & 0.992             & 0.992                 \\
		U-Net dim mult                                 & [1,2,2,2]         & [1,2,2,2]             \\
		Learned time  embedding                        & Yes               & Yes                   \\
		$\#$ GPUs                                      & 4                 & 4                     \\
		\bottomrule
	\end{tabular}
	\vspace{2.5mm}
	\caption{Hyperparameters and architecture for image datasets.}
	\label{tab:archs:img}
\end{table}

\end{document}